\documentclass[lettersize,journal]{IEEEtran}
\usepackage{amsmath,amsfonts}
\usepackage{algorithmic}
\usepackage{algorithm}
\usepackage{amsthm}
\usepackage{array}
\usepackage[caption=false,font=normalsize,labelfont=sf,textfont=sf]{subfig}
\usepackage{textcomp}
\usepackage{stfloats}
\usepackage{xurl} 
\usepackage{hyperref}
\usepackage{verbatim}
\usepackage{graphicx}
\usepackage{cite}
\usepackage{booktabs}
\usepackage{multirow}
\usepackage{threeparttable}
\usepackage[normalem]{ulem}
\providecommand{\tabularnewline}{\\}

\newtheorem{lemma}{Lemma}

\begin{document}

\title{TSNN: A Non-parametric and Interpretable Framework for Traffic Time Series Forecasting}


\author{Bowen Liu, Haijian Lai,  
Chan-Tong Lam,
Junhao Dong,
Benjamin Ng, 
Wei Ke,
and Sio-Kei Im
\thanks{This article has been accepted for publication in IEEE Transactions on Knowledge and Data Engineering.}
\thanks{This work was supported by the Science and Technology Development Fund of Macao (FDCT) \#0033/2023/RIA1.}
\thanks{Bowen Liu; Haijian Lai; Chan-tong Lam, Benjamin K. Ng and Wei Ke are with the Faculty of Applied Sciences, Macao Polytechnic University, Macao SAR, China. (e-mail: \{bowen.liu, haijian.lai, ctlam, bng, wke\}@mpu.edu.mo).}
\thanks{Junhao Dong is with Amazon Web Services, Inc. This work is not related to the role at Amazon. (e-mail: derekdong2022@gmail.com).}
\thanks{Sio-kei Im is with Macao Polytechnic University, Macao SAR, China. (e-mail: marcusim@mpu.edu.mo).}}

\markboth{Journal of \LaTeX\ Class Files,~Vol.~14, No.~8, August~2021}%
{Shell \MakeLowercase{\textit{et al.}}: A Sample Article Using IEEEtran.cls for IEEE Journals}

\IEEEpubid{0000--0000/00\$00.00~\copyright~2021 IEEE}

\maketitle

\begin{abstract}

Although many complex models were proposed to analyze time series data, some studies have demonstrated remarkable performance with simpler structures. A recent study proposed a non-parametric framework for 3D point cloud classification, which has the potential to be adapted for time series forecasting and enable interpretability. Inspired by the previous works, we present TSNN, a non-parametric and interpretable framework for traffic time series forecasting. 
TSNN consists of multiple layers that decouple the time series by matching the entries in a memory bank, where the memory bank is constructed using a similar matching process within the training set. It leverages the periodicity in traffic data to enhance forecasting accuracy while maintaining a simple model architecture. 
The proposed model operates without trainable parameters, preserving its inherent interpretability.
In the experiments, TSNN achieves competitive performance compared to the typical deep learning models in four real-world traffic flow datasets. We also visualize the decoupling process to show the effectiveness of the components. Finally, we demonstrate the interpretability of the model and illustrate the contribution of each time step within the memory bank. Our code is available at \url{https://github.com/pzzzzzm/TSNN_release}.

\end{abstract}

\begin{IEEEkeywords}
time series forecasting, traffic flow forecasting, non-parametric model, interpretability
\end{IEEEkeywords}

\section{Introduction}

Time series forecasting is the task of predicting future time series data by fitting the historical data \cite{lim2021time}, which plays a pivotal role in various fields, including but not limited to transportation \cite{lana2018road}, electricity \cite{zhou2021informer}, and climate modeling \cite{mudelsee2019trend}. Traffic prediction is one of the most important applications of time series forecasting, enhancing congestion control, urban planning, and transportation efficiency \cite{liu2024largest, medina2022urban}. 

A key characteristic of time series forecasting for traffic data is the temporal dependency and periodicity.
Strong periodicity often influences traffic data, which leads to daily or weekly patterns. Thus, efficiently leveraging periodicity becomes a challenge in traffic forecasting. Moreover, handling the diversity and complexity of time series data is challenging to propose general and efficient forecasting approaches \cite{kim2025comprehensive}. For the traffic data with multiple sensors, the spatial dependency also benefits the prediction accuracy.

Early studies for traffic forecasting are mostly based on statistic-based methods, with the most prominent being the auto-regressive integrated moving
average (ARIMA) method \cite{moorthy1988short, lee1999application} and Kalman filters \cite{guo2014adaptive}. Then, machine learning approaches were introduced into this field, such as Support Vector Regression (SVR) \cite{wu2004travel}, to achieve more accurate predictions for complex time series. After the advances of deep learning, many studies utilized spatial dependency to complement the limited information of time series \cite{li2018diffusion, wu2019graph, yu2017spatio}, which adopts graph convolution on multivariate time series data. Besides, recent studies used transformer-based models to analyze the traffic time series \cite{cai2020traffic, luo2024lsttn, chen2022bidirectional}.

Although the models have become increasingly complex to enhance accuracy with the cost of inefficiency, a model utilizing spatial and temporal identity information (STID) \cite{shao2022spatial} with only multilayer perceptrons (MLP) achieves competitive performance in both accuracy and efficiency. STID explicitly embeds spatial and temporal information of the input time series data and concatenates them with the raw input for the latter process. In the time series in traffic forecasting, the data is usually indistinguishable in time dimension because it only provides limited information. However, when the time label is considered, the strong periodicity of the traffic time series is useful and crucial for the prediction result \cite{yao2019revisiting}. Based on this, we may infer that periodicity can provide basic patterns for traffic forecasting if it is properly utilized, while the structure of models provides the capability of analyzing the rest of the information in a time series. Therefore, achieving an adequate performance with a simple model is possible.

Recently, a study in point cloud classification proposed a non-parametric framework, named Point-NN \cite{zhang2023parameter}, based on the structure of a deep learning model PointNet++ \cite{qi2017pointnet++}. Point-NN builds a memory bank by transforming the training data and classifies the test data into the class with the highest matching score. With no trainable parameter included, it surpasses the baseline neural network in the accuracy of 3D point cloud classification. Meanwhile, the non-parametric structure has the potential to provide inherent interpretations of the prediction, enhancing the reliability of time series forecasting in crucial decision-making applications. Motivated by the success of non-parametric frameworks and the periodicity in time series, we raise the question: can we design a non-parametric framework for time series forecasting that leverages periodic patterns?

\IEEEpubidadjcol

We propose TSNN, a \textbf{N}on-parametric \textbf{N}etwork for traffic \textbf{T}ime \textbf{S}eries forecasting. It consists of multiple layers that decouple the time series by calculating the similarity score with the entries in a memory bank. The memory bank is constructed using a similar matching process within the training set, where the training data match each other. The prediction is aggregated from the entries in the memory bank based on the similarities between the entries and input data, as well as the similarities between the entries and residual signals. The first layer leverages the periodicity in traffic data to enhance forecasting accuracy while maintaining a straightforward and efficient architecture. 
The traceability of memory bank entries to the data source, along with the absence of trainable parameters, enables the inherent interpretability of TSNN.
We propose an interpretation method that extracts and summarizes the similarity scores. The experiments on TSNN verify the effectiveness of the proposed structure and components, which can provide insights into the area of time series forecasting.

Our contributions are summarized as follows:
\begin{itemize}
    \item We propose TSNN, a non-parametric and interpretable framework for traffic time series forecasting, based on a memory bank and layerized structure;
    \item We propose time-wise decoupling and mean decoupling components to enhance forecasting accuracy;
    \item We visualize the interpretability of TSNN and conduct experiments to verify the effectiveness of the proposed framework and components. 
\end{itemize}

The remaining of the paper is organized as follows. Section \ref{sec:related} introduces the related works about traffic time series forecasting and non-parametric models. Section \ref{sec:prel} covers the preliminary knowledge. Section \ref{sec:method} presents the structure of the proposed framework. Section \ref{sec:exper} demonstrates the experiments, including the performance, ablation study, visualization, etc. Section \ref{sec:limit} discusses the limitations of the framework. Finally, Section \ref{sec:conclu} concludes the paper.

\section{Related Works} \label{sec:related}

\subsection{Time Series Forecasting on Traffic}

Early studies on time series forecasting were mostly based on statistical methods. ARIMA is a widely used model for time series prediction \cite{moorthy1988short, lee1999application}.  Some varieties of ARIMA include seasonal components to improve the ability to handle the periodicity, such as SARIMA and SARIMAX \cite{lippi2013short, vagropoulos2016comparison}. Vector Autoregressive (VAR) adapts the autoregressive to multivariate time series by calculating the weighted sum of vectors in the past time steps \cite{stock2001vector}. The model-driven structures of statistical models enable better interpretability.

As data-driven models, neural networks are more capable of fitting complex functions even if the functions are implicit. Due to the natural capability of process sequence data, the neural networks based on Recurrent Neural Networks (RNNs) are widely used in time series forecasting \cite{sagheer2019time, cao2019financial, kumar2018energy, zhao2017lstm, ma2020multi}. However, RNN-based models struggle with the iterative prediction process which accumulates errors. Later studies introduced transformer \cite{vaswani2017attention} and attention mechanisms to the time series models \cite{wang2024deep}. Zhou et al. proposed Informer \cite{zhou2021informer} that adapts transformer for long-term time series forecasting by reducing the complexity of the self-attention mechanism, highlighting the main attention information. Informer successfully verifies the effectiveness of transformers in time series forecasting. Another concurrent work proposed Pyraformer \cite{liu2022pyraformer} which introduces a pyramidal attention module to reduce the complexity of retrieving temporal dependencies of different ranges. In addition, some following works leveraged patch-wise attention to compute the attention scores between sub-series, which enables a more intuitive and efficient approach \cite{wu2021autoformer, nie2022time, zhang2023crossformer}.

Due to the nature of spatial dependency in traffic networks, spatial-temporal graph neural networks (STGCNs) are proposed to handle spatial dependency by graphs \cite{shao2024exploring, yu2017spatio}. Li et al. proposed DCRNN \cite{li2017diffusion} for traffic forecasting, modeling the signal diffusion in the traffic network by random walks. STGCN \cite{yu2017spatio} applies convolution in both spatial and temporal dependency. GMAN \cite{zheng2020gman} designs spatio-temporal attention blocks that apply the attention mechanism to traffic networks. Graph WaveNet \cite{wu2019graph} proposes a self-learned hidden graph in the graph convolution module and adapts dilated convolution to aggregate temporal dependency. The following studies use a dynamic graph to complement dynamic dependency into the predefined graph \cite{li2021dynamic, shao2022decoupled}.

Besides the above models, some studies leveraged multilayer perceptrons (MLPs) and achieved impressive performance. N-beats \cite{oreshkin2019n} provides interpretable outputs by utilizing MLPs and seasonality functions. STID \cite{shao2022spatial} achieves competitive performance with only MLPs and spatial and temporal embeddings. 
Even in the field of long-term forecasting, many models based on linear layers, such as DLinear \cite{zeng2023transformers}, RLinear \cite{li2023revisiting}, challenge the necessity of transformer-based models.
More recent studies adapt the Mixture-of-Experts techniques to the time series models \cite{ni2024mixture, lee2024testam, wang2024time}.

\subsection{Non-parametric Models}
Non-parametric models generally refer to the models that do not make strong assumptions or priors to the mapping function form, avoiding finding the best parameters with trial-and-error procedures \cite{carmon2022making,goyal2017nonparametric}. Non-parametric regression has been developed in many studies \cite{aneiros2011functional}. For instance, the Nadaraya–Watson estimator \cite{nadaraya1964estimating, watson1964smooth} was proposed to estimate the local weighted average using kernel regression.
Another study used the Gaussian Process \cite{williams1995gaussian} to optimize the non-parametric Bayesian analysis of neural networks. The non-parametric kernel regression has been used to analyze the spatio-temporal shape to address the issue of the limited expressive power of flat Euclidean spaces for the biological variability \cite{davis2010population}. 
Similarly, a non-parametric predictive methodology for manifold-valued objects is proposed to model complex data that do not comply with Euclidean geometry, such as images and shapes \cite{TSAGKRASOULIS20186}. Rangapuram et al. \cite{rangapuram2023deep} proposed a non-parametric time series forecaster (NPTS) that predicts the future time series by sampling the input series. 


In recent machine learning studies, the definition and scope of non-parametric models have been extended to include models that do not involve trainable parameters. 
In the field of computer vision, Wu et al. \cite{wu2018unsupervised} proposed a non-parametric classifier based on the memory bank which consists of instance-level discriminative features. 
Afterward, MoCo \cite{he2020momentum} improved this structure by dynamically updating the memory bank. 
Beyond non-parametric classifiers, some works presented the non-parametric models by replacing trainable components with non-parametric components. For instance, Point-NN \cite{zhang2023parameter}, Seg-NN \cite{zhu2024no}, and Point-FCW \cite{lai2025point} showed that non-parametric backbones can achieve competitive performance in 3D point cloud classification and segmentation.




\section{Preliminary} \label{sec:prel}

\subsection{Traffic Forecasting}

Traffic forecasting is the task of predicting future traffic signals based on given historical traffic signals. The traffic signals are collected from the sensors in road networks. For the non-parametric framework based on a memory bank, the training data or its transformation can be considered as the input directly. Let $x_i \in \mathbb{R}$ denote the traffic signal at time step $i$, the objective is to map $T$ historical traffic signals to $T^\prime$ future traffic signals by a function $f$,
\begin{equation}
    \Big[x_{i-T+1}, \dots,  x_i; \{x_{i^\prime}\}_{i^\prime \in \mathcal{T}}\Big] \xrightarrow{f} \Big[y_{i+1}, \dots, y_{i+T^\prime}\Big],
\end{equation}
where $\mathcal{T}$ is the set of time steps of training data and $y$ denotes the prediction. 

\subsection{Memory Bank}
The concept of a memory bank is widely utilized in non-parametric and contrastive learning to maintain a dictionary of feature representations \cite{wu2018unsupervised, he2020momentum, zhang2023parameter, lai2025point}. 
Let $X_i$ denote the sample indexed by $i \in \{1, 2, \dots, n\}$ in the dataset with $n$ samples, and $B$ denote the memory bank. A task-specific model $\phi$ transforms the sample into the feature representation, denoted by $\phi(X_i)$. Each entry $b_i \in B$ corresponds to the feature representation of the $i$-th sample. During the construction of the memory bank, the feature representation $\phi(X_i)$ and the label $Y_i$ of the sample $X_i$ are assigned to the corresponding entry, which is represented by,
\begin{equation}
    b_i = \langle \phi(X_i), Y_i \rangle.
\end{equation}
In such frameworks, inference is typically performed based on the feature similarity between the input and stored representations in entries.
Memory banks allow the model to perform operations directly on instance-level features, enabling instance-wise reasoning without aggregating them into weights. 


\section{Methodology} \label{sec:method}

In this section, we present TSNN, a non-parametric framework for traffic forecasting. As shown in Fig. \ref{fig:arch}, it consists of two stages: memory bank construction and prediction. The prediction and residual signals are calculated based on the similarity scores between the input and entries in the memory bank. The structure inside each stage is stacked with layers that decouple the time series. The details of the framework's components are provided in the following sections. Section \ref{sec:model_ts} and \ref{sec:np_reg} introduce two basic concepts of TSNN. Section \ref{sec:dec_ts} depicts the detailed structure of the proposed framework.

\begin{figure*}[t]
    \centering
    \includegraphics[width=1\linewidth,trim={8 0 0 0}, clip]{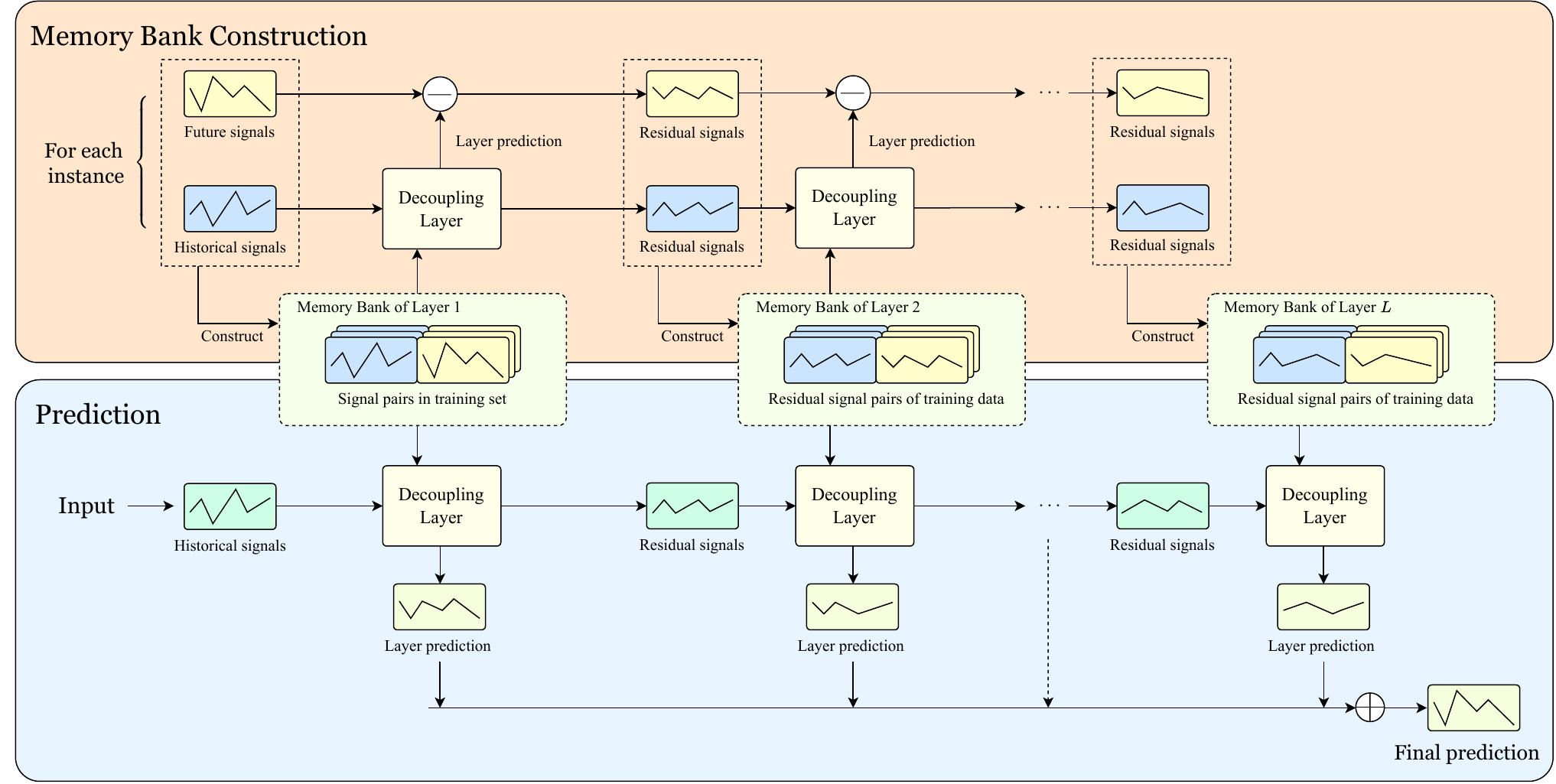}
    \caption{
    The architecture of TSNN. The framework operates in two stages: memory bank construction (top) and prediction (bottom). In the construction stage, each layer iteratively decouples the training signals into residuals, which are stored in the corresponding memory bank. In the prediction stage, the model retrieves information from these banks to generate layer predictions and pass the residuals to the next layer. The final output is the summation of all layer predictions.
    }
    \label{fig:arch}
\end{figure*}

\subsection{Modelling Traffic Signal} \label{sec:model_ts}

Many studies on traffic forecasting utilize the periodicity of the traffic network \cite{yao2019revisiting, shao2022spatial, shao2022decoupled}. Figure \ref{fig:ha_slice} illustrates a slice of data from sensor 120 in PEMS08 dataset with its daily historical average values. Note that the duration of 288 time steps equals one day as the interval between two time steps is 5 minutes. We can observe that the value at the same time on different days is generally similar, despite some perturbation. It reveals patterns on a curve that illustrate the fundamental trends in the data. Therefore, we can model a fundamental trend at a time step as an implicit pattern, which forms the traffic signal with the perturbation caused by external factors. 
Define the periodic time step as a recurring time point within the time series that follows a specific cycle. For example, the time point of 12:00 is a periodic time step that corresponds to multiple time steps in the whole time series.
Given a traffic signal sequence $X_i$ starting from the time step $i$ and denote the periodic time step of $i$ as $p(i)$ where $p(i) = i \mod t$ and $t$ is the number of time step in a period, we model it by,
\begin{equation} \label{equ:xi_def}
    X_i = X^{imp}_{p(i)} + X^{dif}_i,
\end{equation}
where $X^{imp}_{p(i)}$ represents the implicit pattern at the periodic time step $p(i)$ and $X^{dif}_i$ represents the perturbation at this time step.

\begin{figure}[htbp]
    \centering
    \includegraphics[width=1\linewidth,trim={30 0 60 50}, clip]{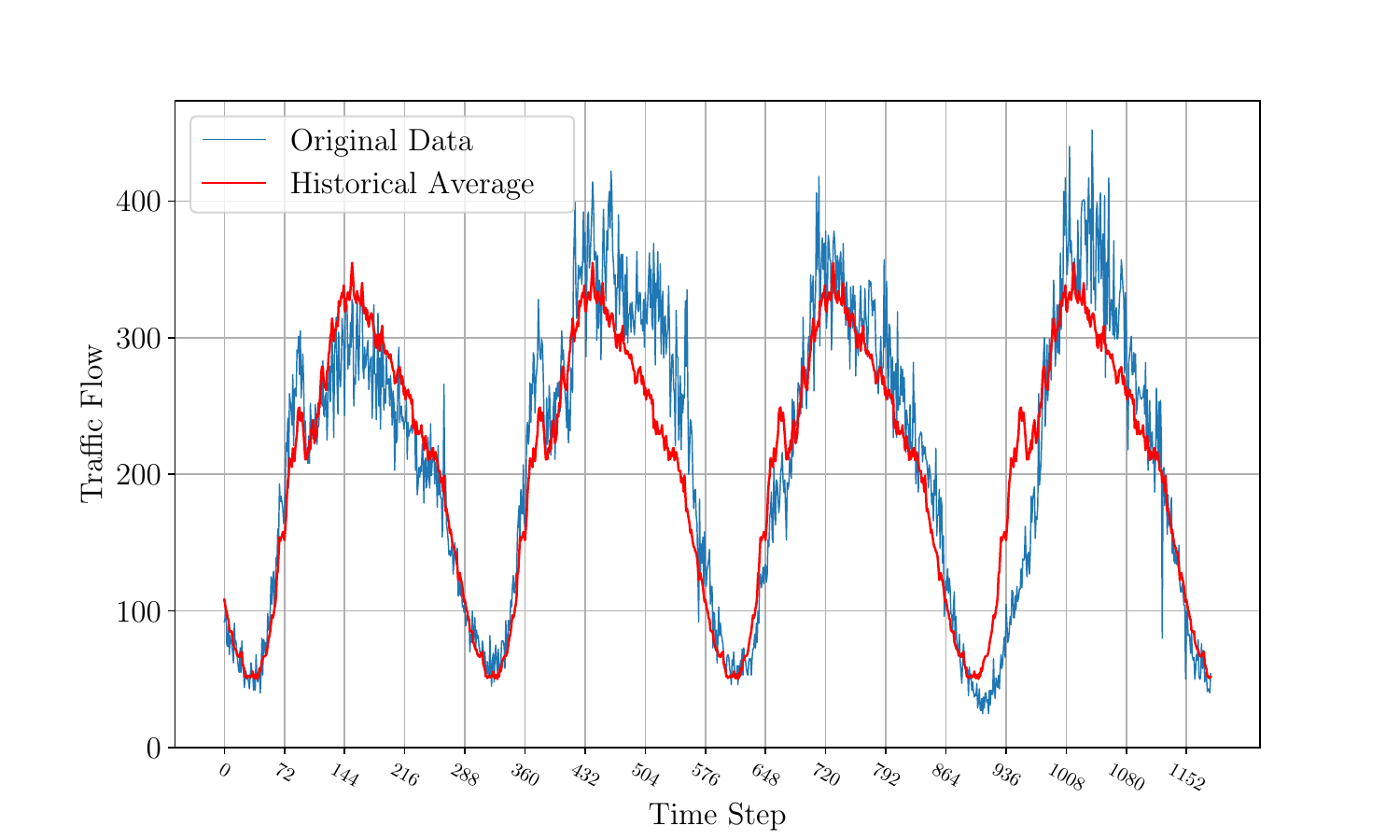}
    \caption{Example of traffic flow data and its historical average from sensor 120 in PEMS08 dataset.}
    \label{fig:ha_slice}
\end{figure}


\subsection{Non-parametric Regression based on Memory Bank}
\label{sec:np_reg}

Memory banks serve as dictionaries for feature retrieval and have demonstrated remarkable success in non-parametric tasks, such as 3D point cloud classification
Point-NN \cite{zhang2023parameter}. Point-NN determines the classification result by calculating the similarity score between the transformed input instance and training data.
Traffic data exhibits strong periodicity, making it feasible to predict future states by retrieving historical data \cite{cui2021historical}. Therefore, adopting the non-parametric framework and predicting future states by aggregating historical data is a feasible direction.

For a classification task, obtaining the prediction by selecting the highest total scores is intuitive and straightforward. In contrast, time series forecasting requires a sequence of continuous values as its output. To produce a sequence of continuous values, the similarity scores can be treated as the coefficients for calculating the approximation of the input instance by the training data,
\begin{equation} \label{equ:x_approx}
    \tilde{X}_i = \sum_{j\in \mathcal{T}} \alpha_{ij} X_j,
\end{equation}
where $\alpha_{ij}$ denotes the similarity score of $j$-th training data whose calculation formula is introduced later. By the assumption of strong periodicity of the traffic flow, we can infer that $X_{p(i)}^{imp}$ in Equation (\ref{equ:xi_def}) predominantly determines the pattern of $X_i$, and $Y^{imp}_{p(i)}$ determines the pattern of $Y_i$. Then we can approximate $Y_i$ by $\sum_{j\in \mathcal{T}} \alpha_{ij} Y_j$.

\noindent \textbf{Similarity Score Calculation.} Some studies have made efforts to the coefficient calculation of time series data \cite{mueen2010fast, adhikari2012combining}. In this framework based on a memory bank, the instance-wise similarity scores are calculated across the entire training dataset for each input instance. Consequently, computational complexity becomes critical, making optimization-based methods inefficient. In addition, the similarity score needs to be restricted to prevent overfitting. Therefore, we proposed a method to calculate the similarity score based on Euclidean distance with scaling. For an instance $X_i$, we first calculate and normalize its instance-wise distances to each training data,
\begin{equation} \label{equ:d_euc}
    D_{i} = \Big\{\Vert X_i - X_j\Vert_2 \Big\}_{j\in \mathcal{T}},
\end{equation}
\begin{equation} \label{equ:d_norm}
    \hat{D}_i = \frac{D_{i} - \min(D_i)}{\max(D_i) - \min(D_i)}.
\end{equation}
Inspired by the classification head of Point-NN, we scale the similarity score as follows,
\begin{equation} \label{equ:d_act}
    {\alpha}_{ij} = \exp(-(\gamma \hat{d}_{ij})^\beta),
\end{equation}
\begin{equation} \label{equ:a_norm}
    \hat{\alpha}_{ij} = \frac{\alpha_{ij}}{\sum_{k \in \mathcal{T}} \alpha_{ik}},
\end{equation}
where $\gamma$ and $\beta$ are manually defined hyperparameters for scaling and temperature, $\hat{d}_{ij} \in \hat{D}_{i}$. Equation (\ref{equ:d_act}) serves as an activation function that enhances the high similarity. Additionally, since $0 \le \hat{d}_{ij} \le 1$ (Equation (\ref{equ:d_norm})), the maximum value of $\hat{\alpha}_j$ is $1$, which constrains the value of each score. Using the calculated similarity scores, we can make a rough approximation of the prediction target,
\begin{equation} \label{equ:yp_sum}
    \hat{Y}_i = \sum_{j\in \mathcal{T}} \hat{\alpha}_{ij} Y_j.
\end{equation}

\noindent \textbf{Relation to Nadaraya–Watson kernel regression.} Ignoring the Equation (\ref{equ:d_norm}), the calculation of the similarity score can be represented as,
\begin{equation}
    \alpha^\prime_{ij} = \exp (-(\gamma \Vert X_i - X_j\Vert_2^\beta)).
\end{equation}
Let $\gamma=1/(2\sigma^2)$ and $\beta = 2$, we have,
\begin{align}
    \alpha^\prime_{ij} &= \exp (-\frac{\Vert X_i - X_j\Vert_2^2}{2\sigma^2}) \\
    &= K(X_i,X_j),
\end{align}
where $K(\cdot)$ denotes a standard Gaussian kernel. By combining Equation (\ref{equ:a_norm}), (\ref{equ:yp_sum}) with $K$, we have the prediction,
\begin{equation}
    \hat{Y}^{\prime}_i = \sum_{j\in \mathcal{T}} \frac{K(X_i,X_j)Y_j}{\sum_{k\in \mathcal{T}}K(X_i,X_k)},
\end{equation}
which is in the form of Nadaraya–Watson kernel regression with a Gaussian kernel. Therefore, we can infer that the proposed procedure of Equation (\ref{equ:d_euc}) to (\ref{equ:yp_sum}) has similar consistency with the Nadaraya–Watson estimator.


\subsection{Decoupling Time Series} \label{sec:dec_ts}


To handle the sparsity and high dimensionality of time series data, we require the kernel to aggregate information from the whole training set rather than relying on nearest neighbors. As a non-parametric method similar to Nadaraya-Watson kernel regression, this inevitably leads to oversmoothed prediction or underfitting \cite{gyorfi2006distribution}. To resolve this problem, we adopt the gradient boosting paradigm. 
We treat each non-parametric regression layer as a weak learner. By stacking multiple layers, each subsequent layer fits the residual errors of the previous one. This transforms a sequence of underfitting non-parametric regression layers into a precise strong model.
We utilize this idea to extract the residual signal from the input signal. Reusing the procedure of Equation (\ref{equ:d_euc}) to (\ref{equ:yp_sum}), the residual signal can be further processed by stacking layers. Let $f(\cdot)$ denote the process from Equation (\ref{equ:d_euc}) to (\ref{equ:a_norm}), we have,
\begin{equation} \label{equ:alpha_i}
    \hat{\boldsymbol{\alpha}}^{(\ell)}_i = f\Big(X^{(\ell)}_i, \{X^{(\ell)}_j\}_{j\in \mathcal{T}_i^{(\ell)}}\Big),
\end{equation}
\begin{equation} \label{equ:x_decouple}
    X^{(\ell+1)}_{i} = X^{(\ell)}_{i} - \sum_{j\in \mathcal{T}_i^{(\ell)}}\alpha^{(\ell)}_{ij} X_j^{(\ell)},
\end{equation}
where $X^{(\ell+1)}_i$ denotes the residual signal as the input of $(\ell + 1)$-th layer and $\mathcal{T}_i^{(\ell)}$ denote the range of training set for this layer. 
When $\ell > 1$, $X_j^{(\ell)}$ refers to the residual signal of a training sample. By avoiding calculating self-similarity ($j \notin \mathcal{T}_j$), the residual signals of training set can be calculated as follows,
\begin{equation} \label{equ:bank_a}
    \hat{\boldsymbol{\alpha}}^{(\ell)}_j = f\Big(X^{(\ell)}_j, \{X^{(\ell)}_k\}_{k\in \mathcal{T}_j^{(\ell)}}\Big),
\end{equation}
\begin{equation}
    X^{(\ell+1)}_{j} = X^{(\ell)}_{j} - \sum_{k\in \mathcal{T}_j^{(\ell)}}\alpha^{(\ell)}_{jk} X_k^{(\ell)},
\end{equation}
\begin{equation} \label{equ:bank_y}
    Y^{(\ell+1)}_{j} = Y^{(\ell)}_{j} - \sum_{k\in \mathcal{T}_j^{(\ell)}}\alpha^{(\ell)}_{jk} Y_k^{(\ell)},
\end{equation}
where $j$ and $k$ denote the indexes in the training set. 
To accelerate the inference, we calculate the residual signals in advance and store them in a memory bank. Let $L$ denote the number of layers, each entry $b_j$ stores a set of constructed pairs of residual signals, expressed as $b_j = \{ \langle X_j^{(\ell)}, Y_j^{(\ell)}\rangle\}_\ell^{L}$.
Based on the memory bank, the layer prediction and final prediction can be calculated as follows,
\begin{equation} \label{equ:yl_sum}
    \hat{Y}^{(\ell)}_i = \sum_{j\in \mathcal{T}_i^{(\ell)}}\alpha^{(\ell)}_{ij} Y_j^{(\ell)},
\end{equation}
\begin{equation} \label{equ:sum_pred}
    \hat{Y} = \sum_{\ell} \hat{Y}^{(\ell)}_i.
\end{equation}
Equation (\ref{equ:alpha_i}) to (\ref{equ:sum_pred}) form the basic structure of the proposed framework.

\noindent \textbf{Decoupling Periodic Information.} Recall the assumption of Equation (\ref{equ:xi_def}), we model that the time series data consists of a period-based implicit pattern dominating the entire pattern, and a perturbation caused by external factors. When the temporal label is provided, leveraging the periodic information benefits the accuracy of the approximation. Therefore, at the first layer, the input performs a \textit{time-wise decoupling}, which only matches the range of the memory bank that has the same periodic time step as the input, 
\begin{equation} \label{equ:t_range}
    \mathcal{T}^{(1)}_i = \Big\{j \in \mathcal{T}\backslash\{i\} \Big\vert  p(j) = p(i) \Big\},
\end{equation}
where $p(i)$ denotes the periodic time step of $i$.

\noindent \textbf{Decoupling Residual Signals.} We first prove that the periodic information can be decoupled in the first layer.

\begin{lemma} \label{thm:time_dec}
    The implicit pattern $X^{imp}_{p(i)}$ in Equation (\ref{equ:xi_def}) is decoupled from $X^{(1)}_i$ after being processed by Equation (\ref{equ:x_decouple}), i.e., $X^{(\ell)}_i$ is uncorrelated to $X^{imp}_{p(i)}$ for $\ell > 1$.
\end{lemma}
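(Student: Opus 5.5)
The plan is to substitute the decomposition of Equation (\ref{equ:xi_def}) into the first-layer update of Equation (\ref{equ:x_decouple}) and exploit the restriction of the first-layer range in Equation (\ref{equ:t_range}). Every $j\in\mathcal{T}^{(1)}_i$ satisfies $p(j)=p(i)$, so $X_j^{(1)} = X_j = X^{imp}_{p(i)} + X^{dif}_j$. I will use the normalized scores $\hat{\alpha}^{(1)}_{ij}$ of Equation (\ref{equ:a_norm}), which sum to one over $\mathcal{T}^{(1)}_i$; I read the unhatted $\alpha$ in Equation (\ref{equ:x_decouple}) as these normalized weights, consistent with Equation (\ref{equ:yp_sum}). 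This gives
\begin{equation*}
X^{(2)}_i = X^{imp}_{p(i)} + X^{dif}_i - \sum_{j\in\mathcal{T}^{(1)}_i}\hat{\alpha}^{(1)}_{ij}\big(X^{imp}_{p(i)} + X^{dif}_j\big) = X^{dif}_i - \sum_{j\in\mathcal{T}^{(1)}_i}\hat{\alpha}^{(1)}_{ij} X^{dif}_j .
\end{equation*}
The implicit pattern therefore cancels exactly, and $X^{(2)}_i$ depends only on the perturbations.

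Next I will make ``uncorrelated'' precise. I will treat $X^{imp}_{p(i)}$ as a fixed (deterministic) pattern per periodic step, or as a random quantity independent of the perturbations. I will also assume, in keeping with the modelling in Section \ref{sec:model_ts}, that the perturbations $X^{dif}$ are zero-mean and uncorrelated with $X^{imp}$. Under this assumption, $X^{(2)}_i$ is a function of the perturbations alone, so its covariance with $X^{imp}_{p(i)}$ vanishes.

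The main obstacle is that the weights $\hat{\alpha}^{(1)}_{ij}$ are themselves random. They depend on the distances $\|X_i - X_j\|_2$, and these distances equal $\|X^{dif}_i - X^{dif}_j\|_2$ because the implicit pattern cancels inside the difference. The weights are therefore functions of the perturbations only. Consequently, $X^{(2)}_i$ is a measurable function of $\{X^{dif}_k\}$ alone, and independence of $X^{imp}$ from the perturbations yields zero covariance.

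For $\ell>2$ I will argue by induction. The training residuals $X^{(2)}_j$ stored in the memory bank via Equations (\ref{equ:bank_a})--(\ref{equ:bank_y}) are, by the same computation, functions of the perturbations only. Each later layer forms $X^{(\ell+1)}_i$ from $X^{(\ell)}_i$ and $\{X^{(\ell)}_k\}$, with weights computed from the same quantities. Every $X^{(\ell)}_i$ for $\ell>1$ therefore remains a function of the perturbations alone and stays uncorrelated with $X^{imp}_{p(i)}$.
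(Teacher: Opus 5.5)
Your proposal is correct and follows the paper's own argument: you substitute Equation (\ref{equ:xi_def}) into Equation (\ref{equ:x_decouple}), use $p(j)=p(i)$ on $\mathcal{T}^{(1)}_i$ together with $\sum_{j}\hat{\alpha}^{(1)}_{ij}=1$ to cancel $X^{imp}_{p(i)}$, and then iterate to the later layers. The paper stops at the informal ``not correlated anymore''; you go further by noting that the weights depend only on the perturbations (since $\Vert X_i - X_j\Vert_2 = \Vert X^{dif}_i - X^{dif}_j\Vert_2$ on $\mathcal{T}^{(1)}_i$), and you rightly invoke independence of $X^{imp}_{p(i)}$ from the perturbations rather than mere uncorrelatedness, which is what a genuine zero-covariance statement requires because these weights are nonlinear.
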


\begin{proof}
    By Equation (\ref{equ:x_decouple}), we have,
    \begin{align*}
        X^{(2)}_i &= X^{(1)}_i - \sum_{j\in \mathcal{T}^{(1)}_i} \alpha^{(1)}_{ij}X^{(1)}_j \\
        &= X^{imp}_{p(i)} + X^{dif}_i - \sum_{j\in \mathcal{T}^{(1)}_i} \alpha^{(1)}_{ij}(X^{imp}_{p(j)}+X^{dif}_j).
    \end{align*}
    Since Equation (\ref{equ:t_range}) specifies that $p(j) = p(i)$ and Equation (\ref{equ:a_norm}) implies that $\sum_j\alpha_j = 1$, we have,
    \begin{align*}
        X^{(2)}_i &= X^{dif}_i - \sum_{j\in \mathcal{T}^{(1)}_i} \alpha^{(1)}_{ij}X^{dif}_j.
    \end{align*}
    Therefore, $X^{(2)}_i$ is not correlated to $X^{imp}_{p(i)}$ anymore. Iteratively, $X^{(\ell)}_i$ is uncorrelated to $X^{imp}_{p(i)}$ for $\ell > 1$.
\end{proof}
Without the implicit pattern, the range of the matching memory bank is extended to include the entire training time steps within the corresponding layer, i.e., $\mathcal{T}^{(\ell)}_i = \mathcal{T}\backslash\{i\}$ when $\ell > 1$. 

In traffic forecasting, the predictable time series sequence commonly shows a continuous and tractable pattern \cite{cui2021historical}. It is reasonable to posit that the implicit pattern is continuous. Consequently, we can infer that the perturbation is also continuous, as the external factors change gradually. Because the residual signals are derived from the perturbations, the basic pattern of the historical data benefits the approximation of future data by focusing on the trends. \textit{Mean decoupling} module acts as a normalization technique to normalize the data distribution by subtracting the mean of the historical residual signal from both historical and future residual signals, ensuring the regression focuses on the trend of the residual signals across the training set. 
When $\ell > 1$, Equation (\ref{equ:bank_a}) to (\ref{equ:bank_y}) for memory bank construction are modified to,
\begin{equation} \label{equ:bank_a2}
    \hat{\boldsymbol{\alpha}}^{(\ell)}_j = f\Big(X^{(\ell)}_j - \bar{X}^{(\ell)}_j, \{X^{(\ell)}_k- \bar{X}^{(\ell)}_k\}_{k\in \mathcal{T}_j^{(\ell)}}\Big),
\end{equation}
\begin{equation}
    X^{(\ell+1)}_{j} = X^{(\ell)}_{j} - \bar{X}^{(\ell)}_j - \sum_{k\in \mathcal{T}_j^{(\ell)}}\alpha^{(\ell)}_{jk} (X_k^{(\ell)}-\bar{X}^{(\ell)}_k),
\end{equation}
\begin{equation} \label{equ:bank_y2}
    Y^{(\ell+1)}_{j} = Y^{(\ell)}_{j} - \bar{X}^{(\ell)}_j - \sum_{k\in \mathcal{T}_j^{(\ell)}}\alpha^{(\ell)}_{jk} (Y_k^{(\ell)}-\bar{X}^{(\ell)}_k).
\end{equation}
Similarly, the process of prediction is modified to,
\begin{equation}
    \hat{\boldsymbol{\alpha}}^{(\ell)}_i = f\Big(X^{(\ell)}_i - \bar{X}^{(\ell)}_i, \{X^{(\ell)}_j- \bar{X}^{(\ell)}_j\}_{j\in \mathcal{T}_i^{(\ell)}}\Big),
\end{equation}
\begin{equation}
    X^{(\ell+1)}_{i} = X^{(\ell)}_{i} - \bar{X}^{(\ell)}_i - \sum_{j\in \mathcal{T}_i^{(\ell)}}\alpha^{(\ell)}_{ij} (X_j^{(\ell)}-\bar{X}^{(\ell)}_j),
\end{equation}
\begin{equation}
    \hat{Y}^{(\ell)}_i = \bar{X}^{(\ell)}_i + \sum_{j\in \mathcal{T}_i^{(\ell)}}\alpha^{(\ell)}_{ij} (Y_j^{(\ell)}-\bar{X}^{(\ell)}_j),
\end{equation}
where $\bar{X}_i$ denotes the mean of $X_i$.

\subsection{Interpretability}

Due to its simple and non-parametric structure, TSNN theoretically does not have any black-box component. Additionally, by calculating instance-wise similarity scores through the memory bank, the model can provide the instance-wise contribution of each entry in the memory bank to the final prediction. We use the similarity scores calculated in the previous sections as the basic contribution reference. For better visualization, the similarity scores are extracted before they have been normalized (Equation (\ref{equ:a_norm})). We define the layer contribution of an entry $b_j$, which contains all pairs of $\langle X_j^{(\ell)}, Y_j^{(\ell)}\rangle$, as the similarity score $\alpha_{ij}^{(\ell)}$ of this layer multiplied by the mean of the layer prediction $\hat{Y}_i^{(\ell)}$. Note that in the first layer, the entries without a matched periodic time step to the input instance are assigned zero contribution in this layer. Therefore, the contribution of an entry $b_j$ can be expressed as follows,
\begin{equation}
    \mathrm{Contribution}(b_j) = \sum_\ell \alpha_{ij}^{(\ell)} \bar{Y}_i^{(\ell)},
\end{equation}
where $\bar{Y}_i^{(\ell)}$ denotes the mean of $\hat{Y}_i^{(\ell)}$.

\section{Experimental Results} \label{sec:exper}

In this section, we present the experiments to demonstrate the performance of TSNN for traffic flow forecasting and the effectiveness of its components on four real-world datasets. Section \ref{sec:exp_set} introduces the experimental setup for the following experiments, including selected datasets, baseline models, metrics, and implementation details. Then, the performance comparison among TSNN and baselines is depicted in Section \ref{sec:main_res}. We conduct experiments to verify the effectiveness of the components in Section \ref{sec:abl_study} and Section \ref{sec:hyp_study}. Section \ref{sec:data_vis} visualizes the data, and Section \ref{sec:interp} demonstrates the interpretability of TSNN. Finally, Section \ref{sec:effi_stu} reports the memory usage and processing time of TSNN.

\subsection{Experimental Setup} 
\label{sec:exp_set}
\subsubsection{Datasets}
To evaluate the performance of TSNN, we conduct experiments on four real-world benchmark datasets for traffic flow, including PEMS03, PEMS04, PEMS07, and PEMS08 \cite{song2020spatial}. The datasets contain the traffic flow data collected by sensors with a fine-grained time resolution of 5 minutes. 
Besides the traffic datasets, we also conduct experiments on three non-traffic datasets, including ETTh1, ETTh2 \cite{zhou2021informer} for the electric indicator, and BeijingAirQuality \cite{basicts} for the air quality index. 
The details of the datasets are listed in Table \ref{tab:ds_detail}.
Note that each dataset contains multiple sensors, TSNN processes each sensor as an independent univariate time series and calculates the average metrics.


\begin{table}[htbp]
\caption{Summary of datasets.}
\label{tab:ds_detail}
\renewcommand\arraystretch{1}
\centering
\scalebox{0.88}{
\begin{tabular}{c|c|c|c|c|c}
\toprule

{\textbf{Dataset}} & \textbf{\#Steps} & \textbf{\#Sensors} & \textbf{Frequency} & \textbf{Duration} & \textbf{Graph} \tabularnewline
\midrule

\textbf{PEMS03}           & 26208                  & 358                 & 5 mins                 & 3 months & Yes          \tabularnewline
\midrule
\textbf{PEMS04}           & 16992                  & 307                 & 5 mins                 & 2 months & Yes           \tabularnewline
\midrule
\textbf{PEMS07}           & 28224                  & 883                 & 5 mins                 & 4 months & Yes           \tabularnewline
\midrule
\textbf{PEMS08}           & 17856                  & 170                 & 5 mins                 & 2 months & Yes           \tabularnewline
\midrule
\textbf{ETTh1}           & 14400                  & 7                & 60 mins                 & 20 months & No           \tabularnewline
\midrule
\textbf{ETTh2}           & 14400                  & 7                & 60 mins                 & 20 months & No           \tabularnewline
\midrule
\textbf{BeijingAirQuality}           & 36000                  & 7                & 60 mins                 & 50 months & No           \tabularnewline
\bottomrule
\end{tabular}
}
\end{table}

\subsubsection{Baselines}
We compare the proposed TSNN with a comprehensive set of baselines. 
The traditional methods include Vector Auto-Regression (VAR) \cite{lu2016integrating} and Historical Inertia (HI) \cite{cui2021historical}.
The representative graph-based models include STGCN \cite{yu2017spatio}, DCRNN \cite{li2018diffusion}, and Graph WaveNet \cite{wu2019graph}.
Recent efficient baselines include DLinear \cite{zeng2023transformers}, FITS \cite{xufits}, RLinear \cite{li2023revisiting}, DeepNPTS \cite{rangapuram2023deep}, and STID \cite{shao2022spatial}. Other baselines include a transformer-based model Informer \cite{zhou2021informer} and a spatial-temporal model ST-Norm \cite{deng2021st}. Note that the graph-based models (STGCN, DCRNN, Graph WaveNet) are not evaluated on non-traffic datasets as the graphs are not provided.



\subsubsection{Metrics}
The evaluation employs Mean Absolute Error (MAE), Root Mean Square Error (RMSE), and Mean Absolute Percentage Error (MAPE) as the metrics. The formulas of metrics are as follows,
\begin{equation}
    \mathrm{MAE} = \frac{1}{\vert \Omega \vert} \sum_{m \in \Omega} \vert y_m - \hat{y}_m \vert,
\end{equation}
\begin{equation}
    \mathrm{RMSE} = \sqrt{\frac{1}{\vert \Omega \vert} \sum_{m \in \Omega} (y_m - \hat{y}_m)^2},
\end{equation}
\begin{equation}
    \mathrm{MAPE} = \frac{1}{\vert \Omega \vert} \sum_{m \in \Omega} \Big\vert \frac{y_m - \hat{y}_m}{y_m} \Big\vert,
\end{equation}
where $y_m$ denote the $m$-th value in the ground truth $Y$ and $\hat{y}_m$ denote the $m$-th value in the prediction $\hat{Y}$, $\Omega$ is the indices of predicted samples where $\vert \Omega \vert = T^\prime$ where $T^\prime$ is the number of time steps in future data.

\subsubsection{Implementation Details}\label{sec:impdetails}

The implementation is based on BasicTS \cite{shao2024exploring} that provides a fair benchmark and toolkit for time series forecasting. Following the previous works \cite{song2020spatial} and the default settings in BasicTS, we divide the datasets into training, validation, and test sets with a ratio of 6:2:2. 
The time steps of historical and future data are set to 12, i.e., $T = T^\prime=12$, and this setting is maintained across all datasets for consistency. We set the number of layers $L = 10$. 
Regarding hyperparameter settings, 
we conducted a grid search on the validation set of the PEMS08 dataset to identify the optimal configuration. This selected configuration was then applied to all datasets to demonstrate generalization. 
Consequently, we set $\gamma = 10$, $\beta=1.5$ and a 3-step tolerance in the time-wise matching (i.e., $\vert p(i) - p(j)\vert \le 3$) in the first layer.
The proposed model is implemented by PyTorch 2.5.1, and the experiments are conducted with an NVIDIA GeForce RTX 4090 GPU. Our code is available at \url{https://github.com/pzzzzzm/TSNN_release}.

\begin{table*}
    \renewcommand\arraystretch{1.2}
    \centering

    \caption{Performance comparison on PEMS03, PEMS04, PEMS07, PEMS08, ETTh1, ETTh2, and BeijingAirQuality datasets. The \textit{Rank} column indicates the ranking of our proposed TSNN against baseline models for each metric.}
    \label{tab:all_perf}
    \scalebox{0.73}{
    \small
    \begin{tabular}{cc|cccccccccccc|cc}
    \toprule
    \midrule
    \textbf{Dataset} & \textbf{Metric} & \textbf{HI} & \textbf{VAR} & \textbf{DeepNPTS} & \textbf{STGCN} & \textbf{DCRNN} & \textbf{GWNet} & \textbf{DLinear} & \textbf{FITS} & \textbf{RLinear} & \textbf{Informer} & \textbf{STNorm} & \textbf{STID} & \textbf{TSNN} & \textit{Rank} \\
    \midrule
    \midrule
    \multirow{3}{*}{PEMS03} & MAE & 32.40 & 22.04 & 21.41 & 15.71 & 15.50 & \textbf{14.45} & 21.08 & 23.61 & 21.60 & 19.11 & \underline{15.03} & 15.37 & 15.21 & 3 \\
     & RMSE & 49.73 & 35.97 & 34.52 & 28.07 & 26.84 & 25.25 & 33.90 & 37.69 & 35.02 & 33.20 & 26.26 & \underline{25.1} & \textbf{24.44} & \textbf{1} \\
     & MAPE & 30.59\% & 22.81\% & 20.41\% & 15.40\% & 15.49\% & \underline{14.82\%} & 20.63\% & 22.45\% & 19.64\% & 18.35\% & \textbf{14.31\%} & 15.89\% & 15.81\% & 5 \\
    \midrule
    \multirow{3}{*}{PEMS04} & MAE & 42.36 & 23.51 & 27.96 & 19.63 & 19.71 & 18.97 & 27.34 & 30.14 & 28.28 & 22.26 & \underline{18.96} & \textbf{18.68} & 19.28 & 4 \\
     & RMSE & 61.66 & 36.39 & 42.99 & 31.32 & 31.43 & \textbf{30.32} & 42.75 & 46.16 & 44.37 & 36.14 & 30.98 & \underline{30.61} & 31.69 & 6 \\
     & MAPE & 29.92\% & 17.85\% & 19.27\% & 13.32\% & 13.54\% & 14.26\% & 19.45\% & 21.08\% & 18.59\% & 15.70\% & \underline{12.69\%} & \textbf{12.48\%} & 13.03\% & 3 \\
    \midrule
    \multirow{3}{*}{PEMS07} & MAE & 49.04 & 37.06 & 32.15 & 21.71 & 21.20 & \underline{20.25} & 30.87 & 38.07 & 32.03 & 31.76 & 20.52 & \textbf{20.07} & 20.37 & 3 \\
     & RMSE & 71.18 & 55.73 & 49.5 & 35.41 & 34.43 & \textbf{33.32} & 47.99 & 57.43 & 50.07 & 56.49 & 34.85 & \underline{33.39} & 34.56 & 4 \\
     & MAPE & 22.75\% & 19.93\% & 14.49\% & 9.25\% & 9.06\% & 8.63\% & 13.38\% & 17.65\% & 13.51\% & 13.98\% & 8.77\% & \textbf{8.40\%} & \underline{8.59\%} & \underline{2} \\
    \midrule
    \multirow{3}{*}{PEMS08} & MAE & 34.57 & 22.07 & 22.81 & 15.98 & 15.26 & 14.67 & 21.86 & 23.70 & 22.61 & 22.02 & 15.54 & \textbf{14.55} & \underline{14.57} & \underline{2} \\
     & RMSE & 50.43 & 31.02 & 35.17 & 25.37 & 24.28 & \textbf{23.49} & 34.41 & 36.76 & 35.81 & 35.14 & 25.01 & \underline{23.85} & 24.80 & 4 \\
     & MAPE & 21.63\% & 14.04\% & 14.38\% & 10.43\% & 9.96\% & \underline{9.52\%} & 13.63\% & 14.69\% & 13.64\% & 13.10\% & 10.03\% & \textbf{9.34\%} & 9.58\% & 3 \\
    \midrule
    \multirow{3}{*}{ETTh1} & MAE & 3.20 & 2.28 & 1.92 & - & - & - & 1.96 & 1.88 & 1.82 & 1.87 & 1.78 & \underline{1.64} & \textbf{1.53} & \textbf{1} \\
     & RMSE & 6.50 & 3.96 & 4.02 & - & - & - & 4.02 & 4.01 & 3.96 & 3.38 & \underline{3.32} & 3.50 & \textbf{3.16} & \textbf{1} \\
     & MAPE & 113.73\% & 67.72\% & 76.17\% & - & - & - & 75.85\% & 77.61\% & 74.19\% & 64.00\% & \underline{62.25\%} & 65.36\% & \textbf{61.86\%} & \textbf{1} \\
    \midrule
    \multirow{3}{*}{ETTh2} & MAE & 3.84 & 3.10 & 2.24 & - & - & - & 2.22 & 2.23 & \underline{2.15} & 2.18 & 2.40 & \textbf{2.04} & 2.21 & 4 \\
     & RMSE & 5.65 & 4.49 & 3.67 & - & - & - & 3.67 & 3.69 & 3.64 & \underline{3.35} & 3.81 & \textbf{3.34} & 3.48 & 3 \\
     & MAPE & 50.30\% & 50.99\% & 34.55\% & - & - & - & 32.35\% & 32.23\% & \textbf{28.49\%} & 30.08\% & 38.27\% & \underline{28.57\%} & 31.09\% & 4 \\
    \midrule
    \multirow{3}{*}{BJAirQuality} & MAE & 25.36 & 15.60 & 16.31 & - & - & - & 16.22 & 16.51 & 15.83 & \underline{14.38} & \textbf{13.9} & 14.82 & 15.41 & 4 \\
     & RMSE & 51.92 & 33.38 & 37.23 & - & - & - & 35.81 & 36.13 & 35.30 & 33.81 & \textbf{32.69} & \underline{33.08} & 34.59 & 5 \\
     & MAPE & 79.58\% & 64.86\% & 48.09\% & - & - & - & 52.44\% & 47.69\% & 46.26\% & \underline{46.03\%} & \textbf{44.62\%} & 48.06\% & 57.70\% & 8 \\
    \bottomrule
\end{tabular}
    }
\end{table*}

\subsection{Overall Performance} 
\label{sec:main_res}

In this section, we present a comprehensive performance comparison between TSNN and the baselines across multiple real-world datasets,  covering both traffic and non-traffic domains. The overall results are summarized in Table \ref{tab:all_perf}. 
We highlight the best results in bold and the second-best results in underlined fonts.
The \textit{Rank} column indicates the performance ranking of the proposed TSNN on each metric. 
Note that for non-traffic datasets (ETTh1, ETTh2, and BeijingAirQuality), graph-based models (STGCN, DCRNN, GWNet) are excluded from the comparison as these datasets do not provide a predefined spatial graph structure.

Despite being a non-parametric framework, TSNN demonstrates remarkable competitiveness against deep learning models. As indicated in the Rank column, TSNN achieves a top-3 ranking in 11 out of the 21 metrics across all datasets. On the traffic datasets, TSNN consistently outperforms statistical baselines and shows comparable performance to deep learning models. For example, TSNN achieves the best RMSE on PEMS03 and the second-best MAE on PEMS08. Compared to STID, which is a strong MLP-based baseline, TSNN still shows competitiveness, surpassing it in all metrics on PEMS03. On the non-traffic datasets, TSNN achieves the best performance on all metrics on the ETTh1 dataset. For ETTh2, TSNN remains competitive, securing a top-4 position in all metrics and showing stability in handling electricity data. Note that TSNN does not utilize the spatial information in the datasets, leaving opportunities for further enhancement.

\noindent \textbf{MAPE Instability on BeijingAirQuality.} Although TSNN ranks 4th and 5th in MAE and RMSE, respectively, on BeijingAirQuality dataset, its MAPE ranking drops to 8th. MAPE can be inflated by small values in the dataset, and the weighted aggregation mechanism of TSNN makes this sensitivity more significant. We verify this by calculating the Near-Zero Ratio for all datasets, which is defined as the average percentage of values of sensors falling within the bottom 5\% of the magnitude range (i.e., values $\le 0.05 \times \text{Max}$). The results are summarized in Table \ref{tab:nzr}.
\begin{table}[h]
\caption{Near-Zero Ratio of datasets.}
\label{tab:nzr}
\renewcommand{\arraystretch}{0.9}
\centering
\scalebox{0.9}{
\begin{tabular}{c|c}
\toprule
\textbf{Dataset} &  \textbf{Near-Zero Ratio} \\ 
\midrule
PEMS03 &  6.23\% \\
\midrule
PEMS04 & 6.71\% \\
\midrule
PEMS07 & 2.35\% \\
\midrule
PEMS08 & 1.58\% \\
\midrule
ETTh1 & 15.64\% \\
\midrule
ETTh2 & 24.75\% \\
\midrule
BeijingAirQuality & \textbf{34.37\%} \\
\bottomrule
\end{tabular}
}
\end{table}
The results reveal that BeijingAirQuality is an outlier with the highest Near-Zero Ratio of 34.37\%. This ratio is significantly higher than the other datasets. While TSNN can handle moderate near-zero values if they exhibit strong periodicity (as in ETTh1), the large proportion of small values in BeijingAirQuality makes the MAPE metric unstable for this dataset, despite TSNN achieving competitive MAE performance.

\subsection{Ablation Study} \label{sec:abl_study}

In this section, we conduct several ablation experiments to verify the effectiveness of the components in the proposed framework. We first explore the relation between the performance and the number of layers. Then we verify the effectiveness of the time-wise decoupling in the first layer and the mean decoupling operation in the other layers. Lastly, we verify the necessity of scaling the similarity score. 

\subsubsection{Number of Layers}
Table \ref{tab:layer_study} shows the performance of TSNN with different numbers of layers on the four datasets. We can observe that the errors decrease when the number of layers increases, which verifies the effectiveness of the stacked structure. The first five layers contribute most of the performance and the effect of the last five layers is comparatively minor. Therefore, the number of layers can be lower for efficiency in real applications.

\begin{table}[htbp]
\centering
\caption{Study on the number of layers of TSNN.}
\label{tab:layer_study}

\scalebox{0.73}{
\small
\begin{tabular}{cc|cccccc}

\toprule


& & \multicolumn{6}{c}{\textbf{Number of Layer(s) in TSNN}} \tabularnewline

\midrule

\textbf{Datasets} & \textbf{Metric} & \textbf{1 layer} & \textbf{2 layers} & \textbf{3 layers} & \textbf{5 layers} & \textbf{7 layers} & \textbf{10 layers} \tabularnewline


\midrule

\multirow{3}{*}{\textbf{PEMS03}} & MAE & 17.38 & 15.46 & 15.39 & 15.30 & 15.25 & 15.21 \tabularnewline
& RMSE & 30.14 & 24.79 & 24.68 & 24.56 & 24.49 & 24.44 \tabularnewline
& MAPE & 18.06\% & 16.01\% & 15.95\% & 15.88\% & 15.84\% & 15.81\% \tabularnewline

\midrule

\multirow{3}{*}{\textbf{PEMS04}} & MAE & 20.68 & 19.62 & 19.45 & 19.34 & 19.30 & 19.28 \tabularnewline
& RMSE & 33.75 & 31.98 & 31.87 & 31.76 & 31.72 & 31.69 \tabularnewline
& MAPE & 13.82\% & 13.28\% & 13.17\% & 13.07\% & 13.04\% & 13.03\% \tabularnewline

\midrule

\multirow{3}{*}{\textbf{PEMS07}} & MAE & 23.24 & 21.31 & 20.90 & 20.50 & 20.42 & 20.37 \tabularnewline
& RMSE & 39.56 & 35.20 & 34.95 & 34.73 & 34.63 & 34.56 \tabularnewline
& MAPE & 9.97\% & 9.13\% & 8.94\% & 8.66\% & 8.62\% & 8.59\% \tabularnewline

\midrule

\multirow{3}{*}{\textbf{PEMS08}} & MAE & 16.96 & 15.39 & 14.99 & 14.67 & 14.61 & 14.57 \tabularnewline
& RMSE & 27.33 & 25.31 & 25.08 & 24.92 & 24.85 & 24.80 \tabularnewline
& MAPE & 11.29\% & 10.02\% & 9.84\% & 9.64\% & 9.60\% & 9.58\% \tabularnewline

\bottomrule
\end{tabular}
}
\end{table}

\subsubsection{Effectiveness of Decoupling}
Fig. \ref{fig:abl} illustrates the comparison among the proposed TSNN and three variants of TSNN, including TSNN without time-wise decoupling, TSNN without mean decoupling, and TSNN without both modules. The proposed TSNN is represented by a blue circle line. The orange triangle and the red diamond line, representing the models without the time-wise decoupling module, start from high MAE in the first layer. This observation shows the critical role of periodic information and time-wise decoupling. The green line representing the TSNN without mean decoupling shows a slower fitting trend compared to the proposed TSNN. In the previous two lines above, the performance of TSNN without both modules surpasses that of TSNN without time-wise decoupling after three layers. This could be attributed to the mean decoupling losing some useful patterns from the time series. If additional layers are stacked, the performance of TSNN without mean decoupling might outperform the proposed TSNN. Considering the efficiency, the mean decoupling module is a practical trade-off to achieve satisfactory performance with much fewer layers. 

\begin{figure}
    \centering
    \includegraphics[width=1\linewidth, trim={50 10 60 50}, clip]{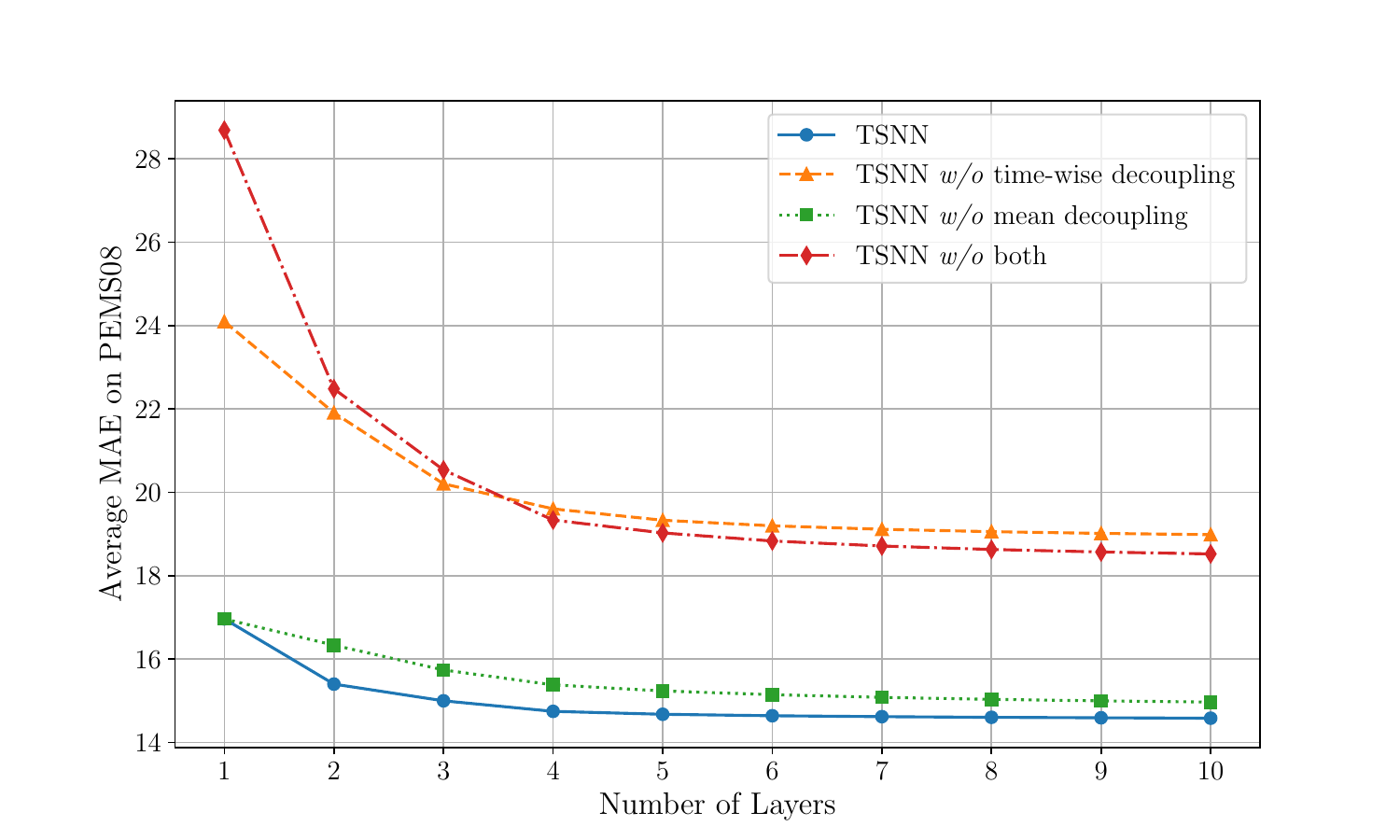}
    \caption{Ablation study of the time-wise decoupling and mean decoupling modules of TSNN. The performances of the proposed TSNN, TSNN without the time-wise decoupling, TSNN without the mean decoupling, and TSNN without both decoupling modules have been compared.}
    \label{fig:abl}
\end{figure}

\subsubsection{Similarity Score Scaling}
The scaling method of the similarity scores decides the degree of fitting for the layer prediction to the memory bank. The prediction result may be overfitting if the model assigns excessive weights to the entries with high similarity. We conduct an ablation experiment to evaluate the effect of the applied scaling method in Equation \ref{equ:d_act}. Considering the purpose of the scaling method, which assigns high scores to the entries with low distances, we compare the performance with the following baseline scaling methods:
\begin{itemize}
    \item No scaling (complement),
    \begin{equation}
        \alpha_{ij} = 1 - \hat{d}_{ij}.
    \end{equation}
    \item Inverse square scaling,
    \begin{equation}
        \alpha_{ij} = \frac{1}{d_{ij}^2 + \epsilon}.
    \end{equation}
    \item Sigmoid scaling,
    \begin{equation}
        \alpha_{ij} = \frac{1}{1+e^{d_{ij}-\mu}}.
    \end{equation}
\end{itemize}
We set $\epsilon=1e{-5}, \mu=0.5$. The experiment results on datasets PEMS04, PEMS08 and ETTh1 are shown in Table \ref{tab:scaling_abl}. The applied scaling method achieves the best performance on all the tested datasets, which verifies the necessity of the scaling method in Equation \ref{equ:d_act}.

\begin{table}[htbp]
\caption{Ablation study of the scaling methods of similarity scores.}
\label{tab:scaling_abl}
\renewcommand\arraystretch{1}
\centering
\scalebox{0.9}{
\begin{tabular}{l|c|c|c}
\toprule
\midrule
\multicolumn{1}{c|}{\textbf{Dataset}} & \textbf{PEMS04} & \textbf{PEMS08} & \textbf{ETTh1} \tabularnewline
\midrule
\midrule
\multicolumn{1}{c|}{\textbf{Method}} &  \textbf{MAE} &  \textbf{MAE} & \textbf{MAE} \tabularnewline
\midrule

TSNN ($w/o$ scaling) & 22.06 & 18.13  & 1.76 \tabularnewline
\midrule
TSNN ($w.$ inverse square scaling) & 26.84 & 19.15  & 2.20 \tabularnewline
\midrule
TSNN ($w.$ sigmoid scaling) & 24.02 & 19.85  & 1.93 \tabularnewline
\midrule
TSNN & \textbf{19.28} & \textbf{14.57} & \textbf{1.53} \tabularnewline
\bottomrule
\end{tabular}
}
\end{table}

\subsection{Hyperparameter Study} \label{sec:hyp_study}

\begin{figure}[!ht]
    \centering
    \subfloat[]{
    \includegraphics[width=0.9\linewidth, trim={13 10 53 35}, clip]{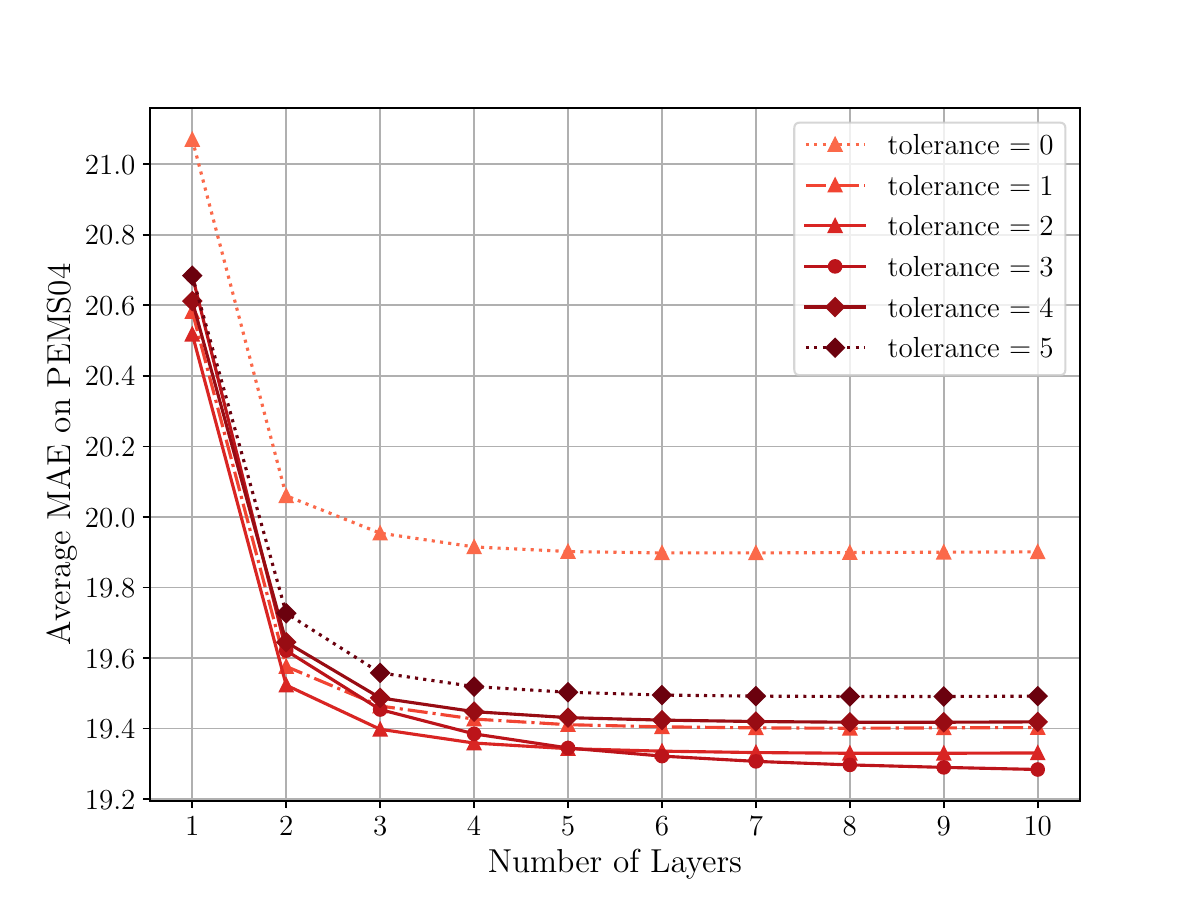}
    \label{fig:hyper_tol}
    }
    \vfill
    \subfloat[]{
    \includegraphics[width=0.9\linewidth, trim={13 10 53 35}, clip]{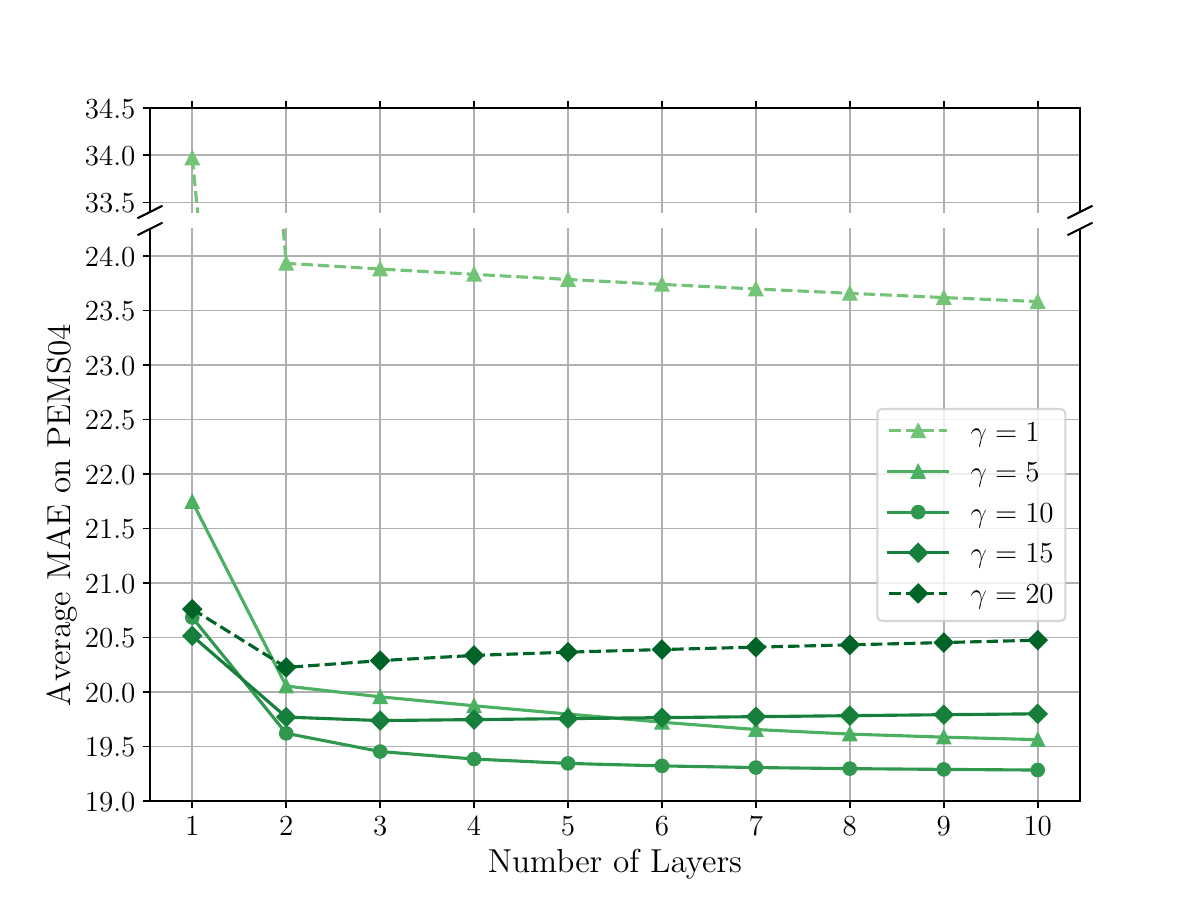}
    \label{fig:hyper_gamma}
    }
    \vfill
    \subfloat[]{
    \includegraphics[width=0.9\linewidth, trim={13 10 53 35}, clip]{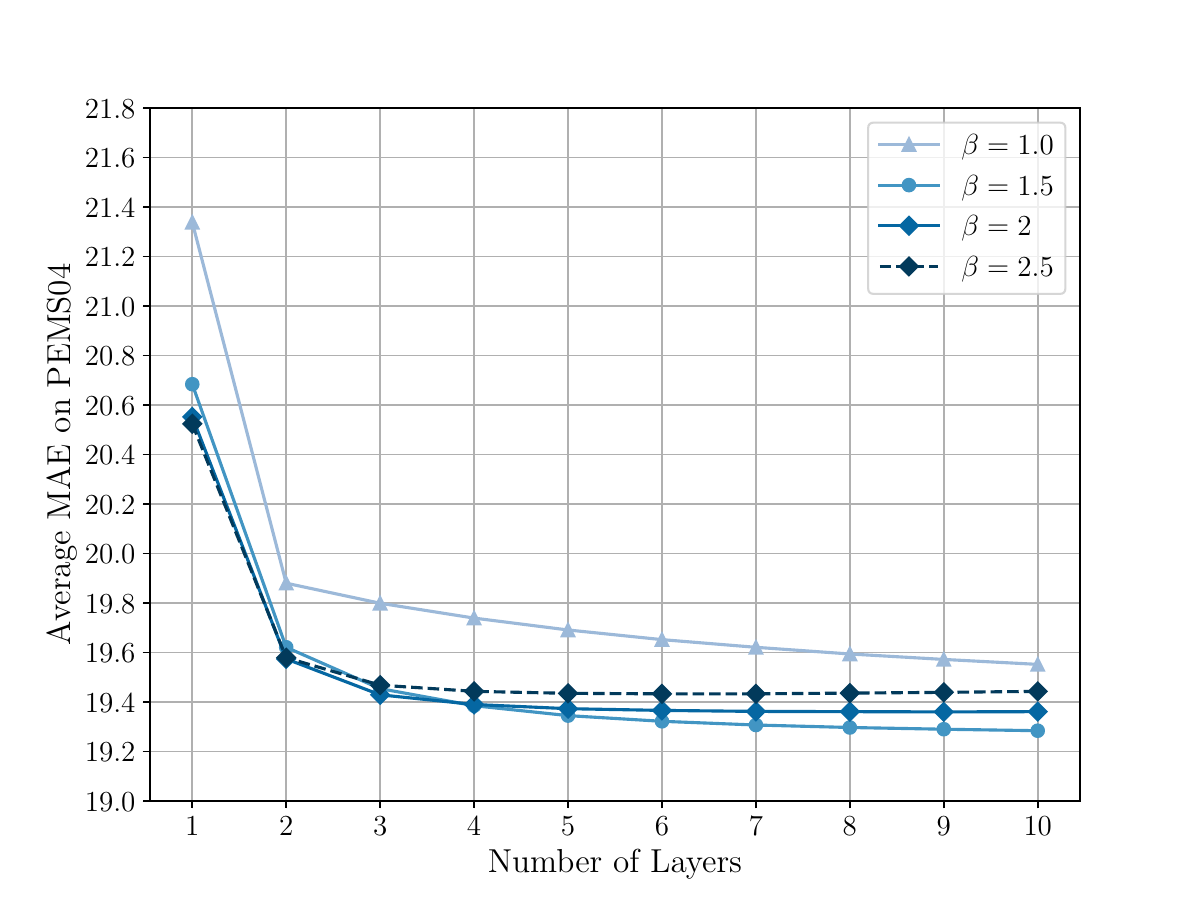}
    \label{fig:hyper_beta}
    }
    \caption{Hyperparameter studies of TSNN on PEMS04 where the lines with circle markers are the proposed settings. (a) The study on time-wise tolerance. (b) The study on $\gamma$. (c) The study on $\beta$. }
    \label{fig:hyper_study}
\end{figure}

This section conducts experiments to analyze the impacts of the hyperparameters, including time tolerance in time-wise decoupling and the values of $\gamma$ and $\beta$. The result of the experiment on the number of layers has been presented in Section \ref{sec:abl_study}. We conduct experiments for each hyperparameter by modifying the value of this hyperparameter and fixing other hyperparameters to the proposed settings on the PEMS04 dataset. The results are shown in Fig. \ref{fig:hyper_study}, where the lines with circle markers are the proposed setting. We observe that the performance of the proposed settings is surpassed by some settings in the early layers, such as lower tolerance and higher $\gamma$ and $\beta$. These settings have a common property that prefers assigning high similarity scores to fewer entries, leading to overfitting in the higher layers. In contrast, the reversed settings (i.e., higher tolerance and lower $\gamma, \beta$) average the scores and weaken the impact of the important entries, leading to underfitting. Furthermore, as detailed the unified hyperparameter configuration ($\gamma=10, \beta=1.5, \mathrm{tolerance} = 3$), which was selected by grid search on the PEMS08 validation set, consistently achieves optimal performance across the plots in Fig. \ref{fig:hyper_study}. This provides compelling evidence of the robustness and generalization capability of TSNN.

\subsection{Data Visualization} \label{sec:data_vis}

In this section, we visualize examples of prediction compared to the ground truth to investigate the proposed framework. Fig. \ref{fig:pred_vis} visualizes slices of predictions for sensors 125 and 126 on the PEMS08 dataset with corresponding ground truth, where each point represents the value of the 6th time step in each predicted time series. The index 0 represents the 979th time step in the test set as its periodic time step is 0. The prediction in Fig. \ref{fig:pred_125} for sensor 125 presents a perfect matching pattern. However, the prediction in Fig. \ref{fig:pred_126} for sensor 126 shows obvious flaws in each periodic series. 
This contrast highlights the framework's robust capability in capturing periodic patterns, while simultaneously revealing limitations in handling out-of-distribution patterns.

\begin{figure}[ht]
    \centering
    \subfloat[]{
    \includegraphics[width=1\linewidth, trim={10 15 0 15}, clip]{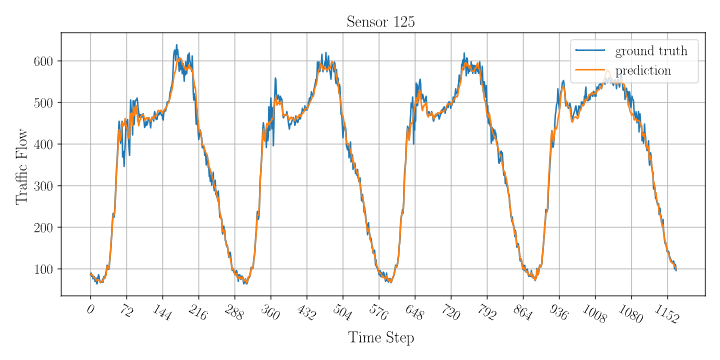}
    \label{fig:pred_125}
    }
    \vfill
    \subfloat[]{
    \includegraphics[width=1\linewidth, trim={10 15 0 15}, clip]{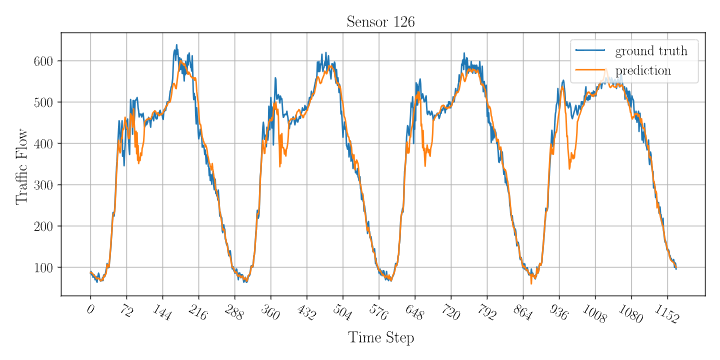}
    \label{fig:pred_126}
    }
    \caption{Visualization of prediction on PEMS08. The index 0 in the figures represents the time step 979 in the test set where the periodic time step is 0.  (a) Sensor 125. (b) Sensor 126.}
    \label{fig:pred_vis}
\end{figure}

To further investigate the behavior of the framework when encountering out-of-distribution patterns, we visualize the aggregated prediction after each layer in Fig. \ref{fig:vis_dec_tsnn}. The predictions of TSNN without mean decoupling are visualized in \ref{fig:vis_dec_nomean} to provide more insights. 
In Fig. \ref{fig:vis_dec_tsnn}, a notable divergence occurs between steps 108 and 216 but is effectively corrected by layer 2. The error span corresponds to the length of the input window plus the prediction horizon, highlighting the model's reliance on mean decoupling.
In Fig. \ref{fig:vis_dec_nomean}, the framework struggles but gradually fixes the prediction to approach the ground truth, which demonstrates the original idea of matching residual signals. This variant might achieve better predictions with an increased number of layers, which derives the same conclusion as Section \ref{sec:abl_study}.

\begin{figure}[ht]
    \centering
    \subfloat[]{
    \includegraphics[width=0.97\linewidth, trim={15 15 0 15}, clip]{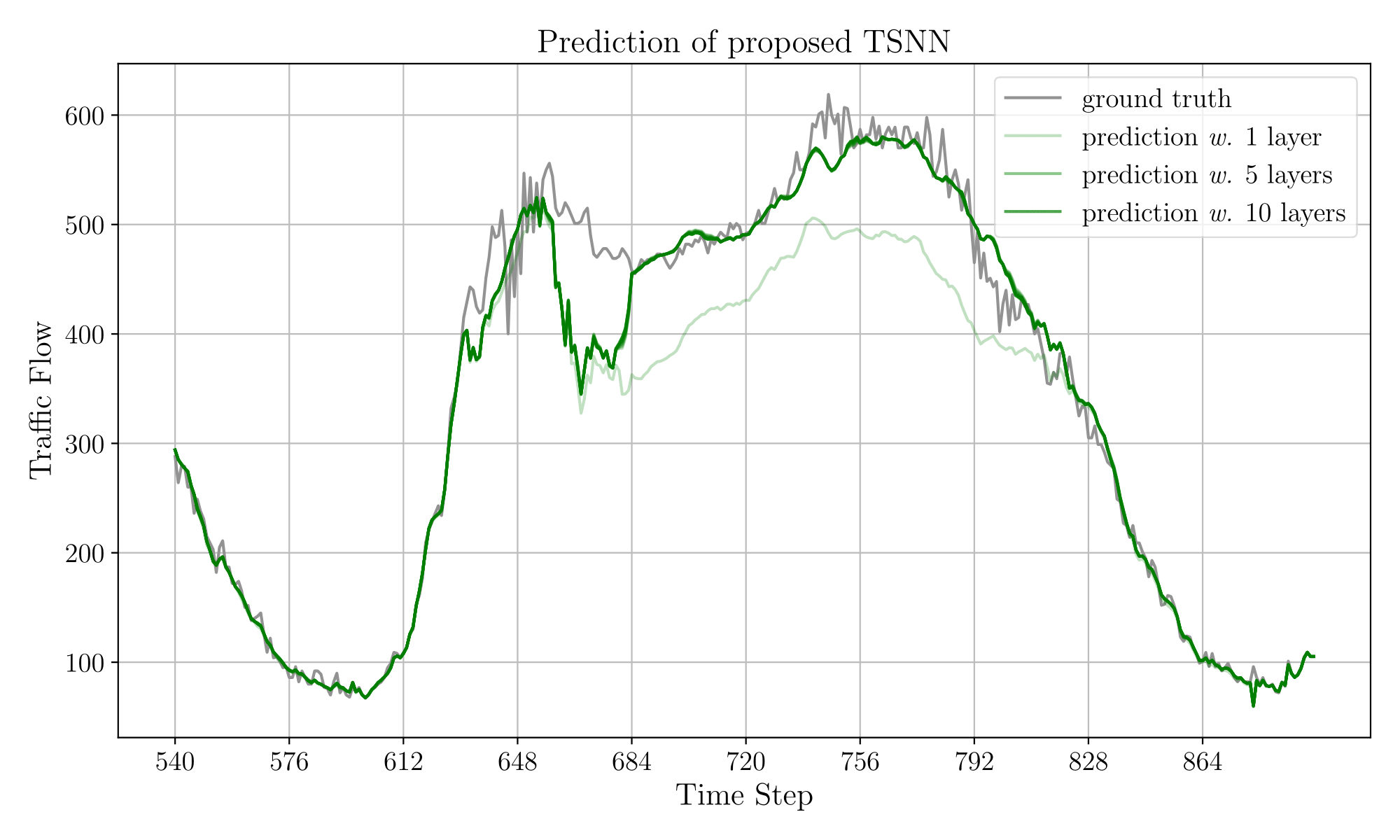}
    \label{fig:vis_dec_tsnn}
    }
    \vfill
    \subfloat[]{
    \includegraphics[width=0.97\linewidth, trim={15 15 0 15}, clip]{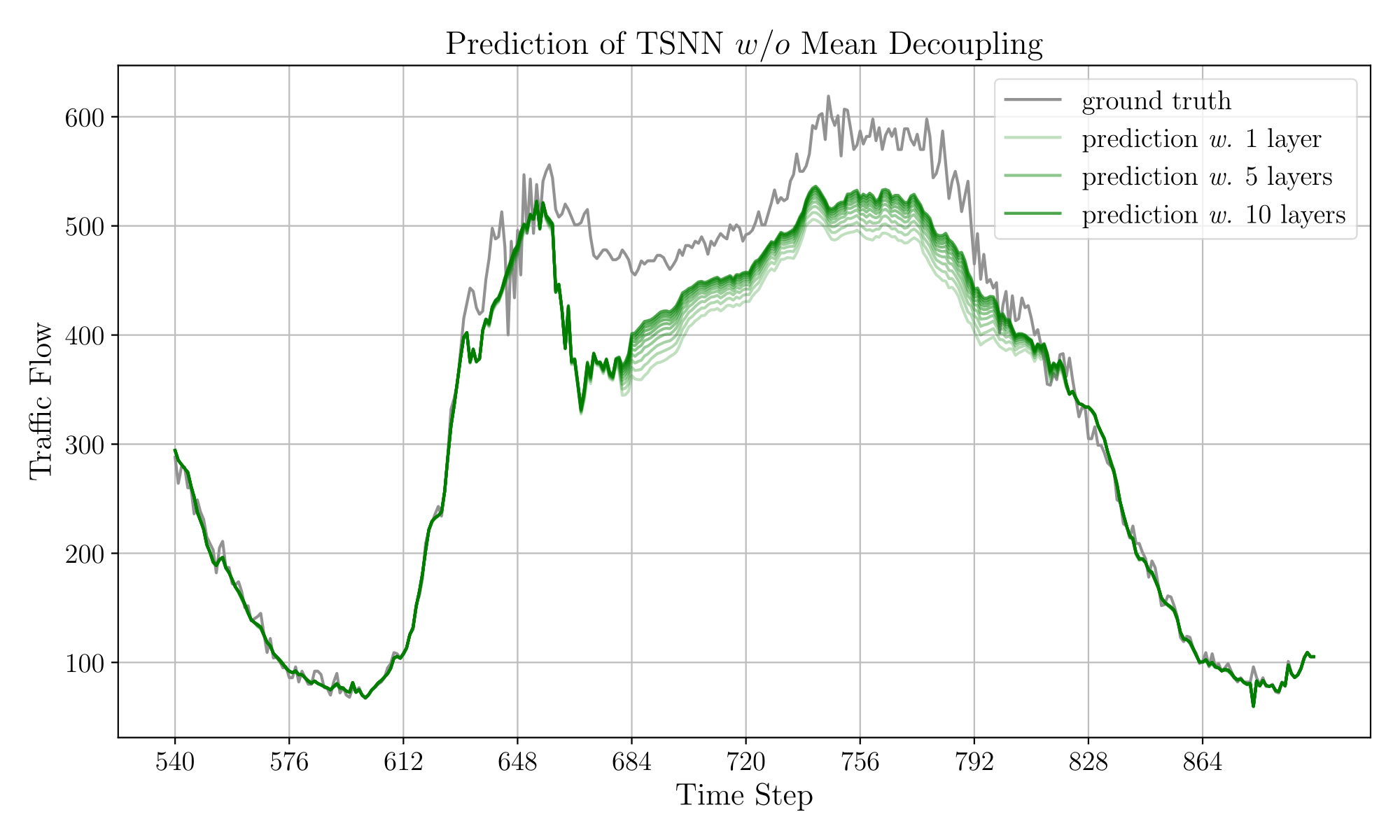}
    \label{fig:vis_dec_nomean}
    }
    \caption{Visualization of aggregated predictions after each layer on sensor 126 in PEMS08. (a) Predictions of original TSNN. (b) Predictions of TSNN without mean decoupling.}
    \label{fig:vis_dec}
\end{figure}

\noindent \textbf{Does the first layer decouple the periodic information?} 
To experimentally verify Lemma \ref{thm:time_dec}, we visualize the clustered input and output of the first layer. We utilize t-SNE \cite{van2008visualizing} to cluster the input time series and output residual signals of each time point in the training set and plot them as points in Fig. \ref{fig:tsne_input} and \ref{fig:tsne_output}. The periodic time step of each point is represented by the depth of color. We observe that in Fig. \ref{fig:tsne_input}, the points show a distinct distribution according to their periodic time step, whereas the pattern of distribution disappears in Fig. \ref{fig:tsne_output}. This change supports the conclusion of Lemma \ref{thm:time_dec}. Moreover, we notice the center of Fig. \ref{fig:tsne_output} clusters some points in light red, which is likely attributable to the low perturbation at night.

\begin{figure*}[p]
    \subfloat[]{
    \includegraphics[width=0.5\linewidth, trim={15 15 15 15}, clip]{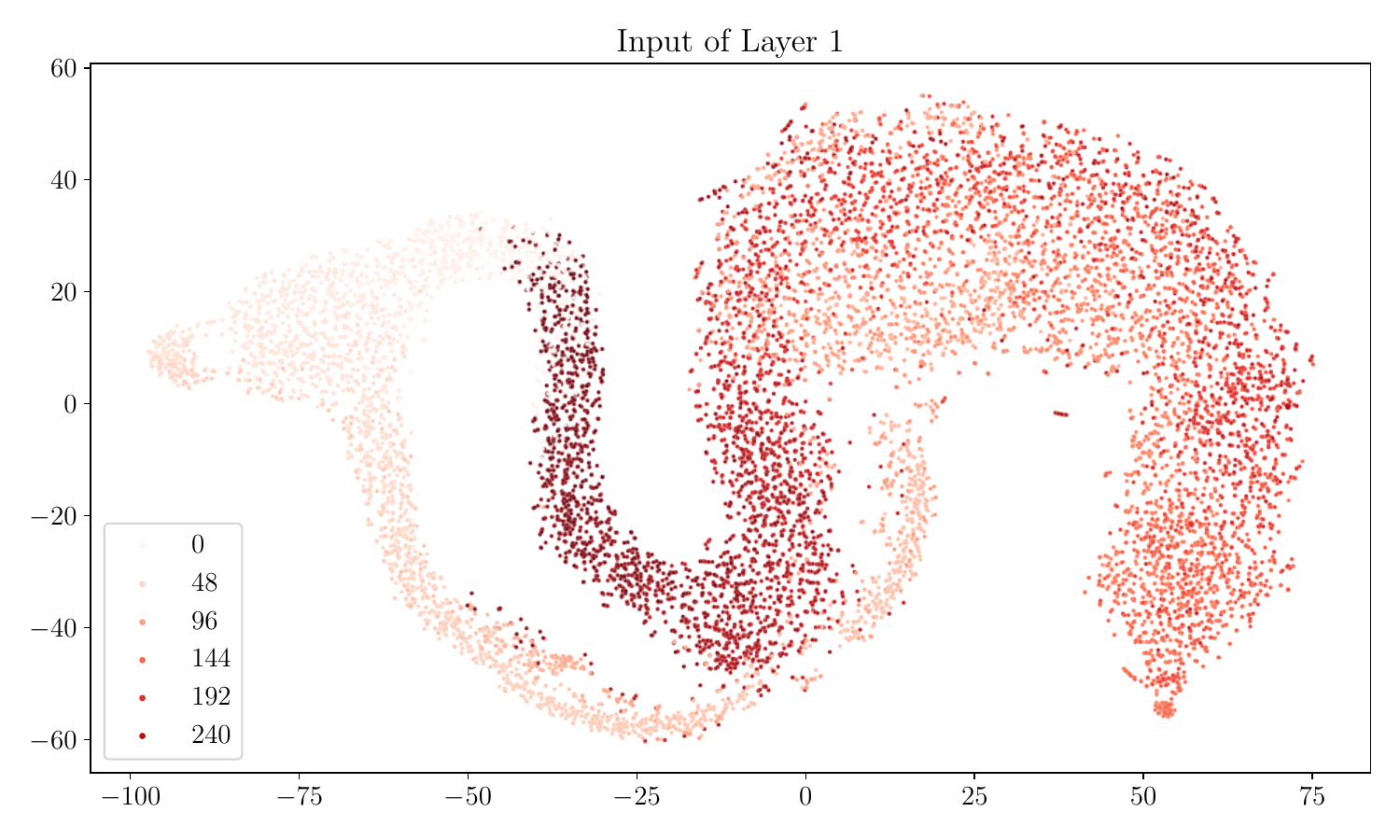}
    \label{fig:tsne_input}
    }
    \hfill
    \subfloat[]{
    \includegraphics[width=0.5\linewidth, trim={15 15 15 15}, clip]{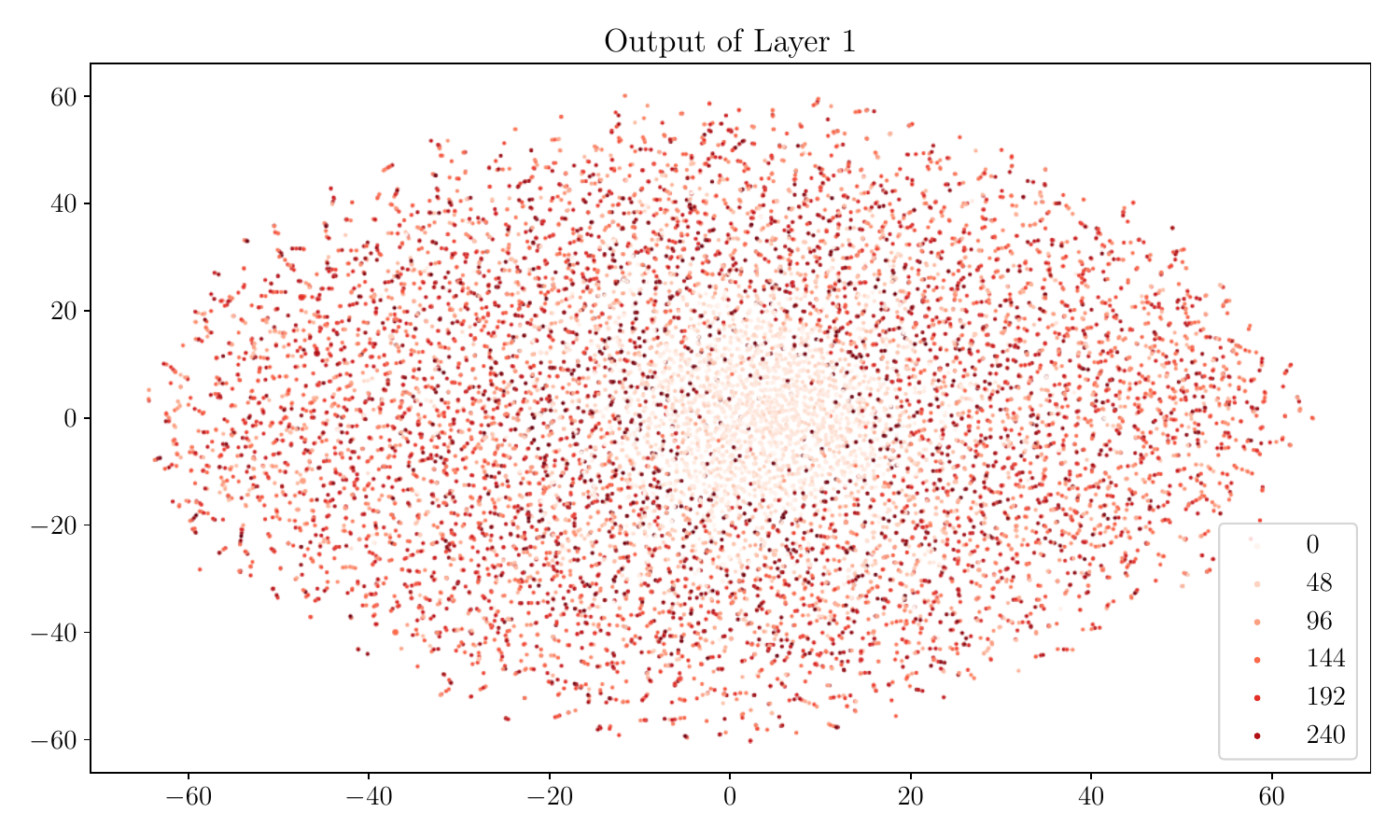}
    \label{fig:tsne_output}
    }
    \caption{Visualization of input time series and output residual signals of the first layer by t-SNE. The visualization of the input shows a distinct distribution and that disappears in the visualization of the output. (a) Visualization of the input time series. (b) Visualization of the output residual signals.}
    \label{fig:vis_tsne}
\end{figure*}

\subsection{Interpretability}\label{sec:interp}

In this section, we provide a case study of interpretability by visualizing and summarizing the contributions of the entries in the memory bank. The contributions to the time step 1339 with the periodic time step of 72 are plotted in Fig. \ref{fig:interp_72_100}. We also plot the traffic flow in the training set for reference. 
Fig. \ref{fig:interp_72_100} visualizes the contribution for time step 1339, where the output of the first layer dominates the final prediction (refer to Fig. \ref{fig:vis_dec_tsnn}). That pattern of the contribution reflects the dominance of the first layer, where more time points with the same $p(i)$ are assigned with high contribution scores, and the time points with different $p(i)$ are assigned with negligible scores. 
Based on the contributions of entries, we can directly observe the instance-wise dependency and infer the dependency on the temporal information. 

To further explore the potential of the contributions of entries, we select 3 days in the test set and summarize the instance-wise contributions to the forms of day-to-day and day-of-week contributions. We select 3 days from sensor 100 in the test sets of the PEMS08 dataset, which includes a Thursday (from time step 1267 to 1554), a Saturday (from time step 2131 to 2418), and a Sunday (from time step 2419 to 2706). 
For each selected day, we first obtain the contributions of each entry in the memory bank to every time step in the selected day. Then we calculate the sum of contributions from each day in the memory bank. The results are plotted on the left side of Fig. \ref{fig:interp_diw}. 
Because of the strong periodicity in the traffic data, the curves of contributions form regular patterns that reflect a larger periodicity, which repeats every seven days.
We further extract the day-of-week labels of the entries and summarize contributions of each day of week to the selected days in the test set, where the results are shown on the right side of Fig. \ref{fig:interp_diw}. We can observe that, except for the top one for the selected Thursday, the highest contributions for the Saturday and Sunday accurately reflect the correct day-of-week labels. Although the highest contribution for the Thursday does not match the correct label, it still reflects the property of a weekday instead of a weekend. Note that the day-of-week labels are not considered during the prediction process. This experiment shows that the contributions can be traced to interpret the prediction, demonstrating the interpretability of TSNN and its potential to provide more insightful information to users.

\begin{figure*}
    \centering
    \includegraphics[width=1\linewidth]{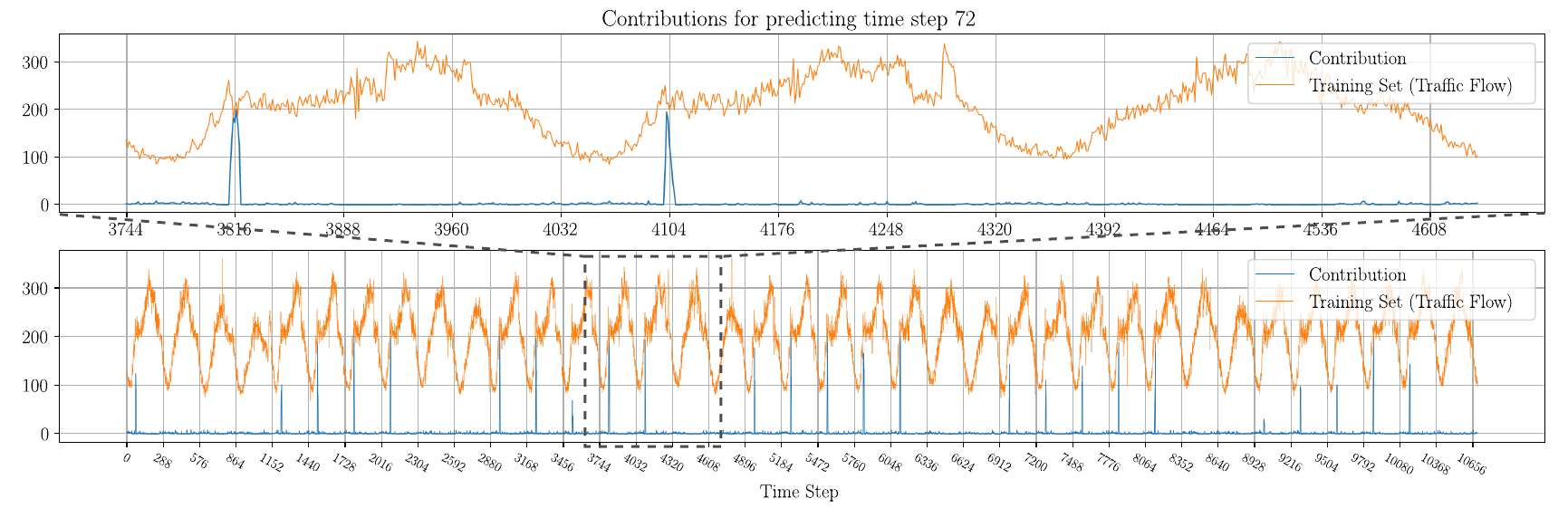}
    \caption{Visualizations of the contributions to the time step 1339 (with the periodic time step of 72) of sensor 100 in the test set of PEMS08.}
    \label{fig:interp_72_100}
\end{figure*}

\begin{figure*}
    \centering
    \includegraphics[width=0.95\linewidth, trim={0 0 0 0}, clip]{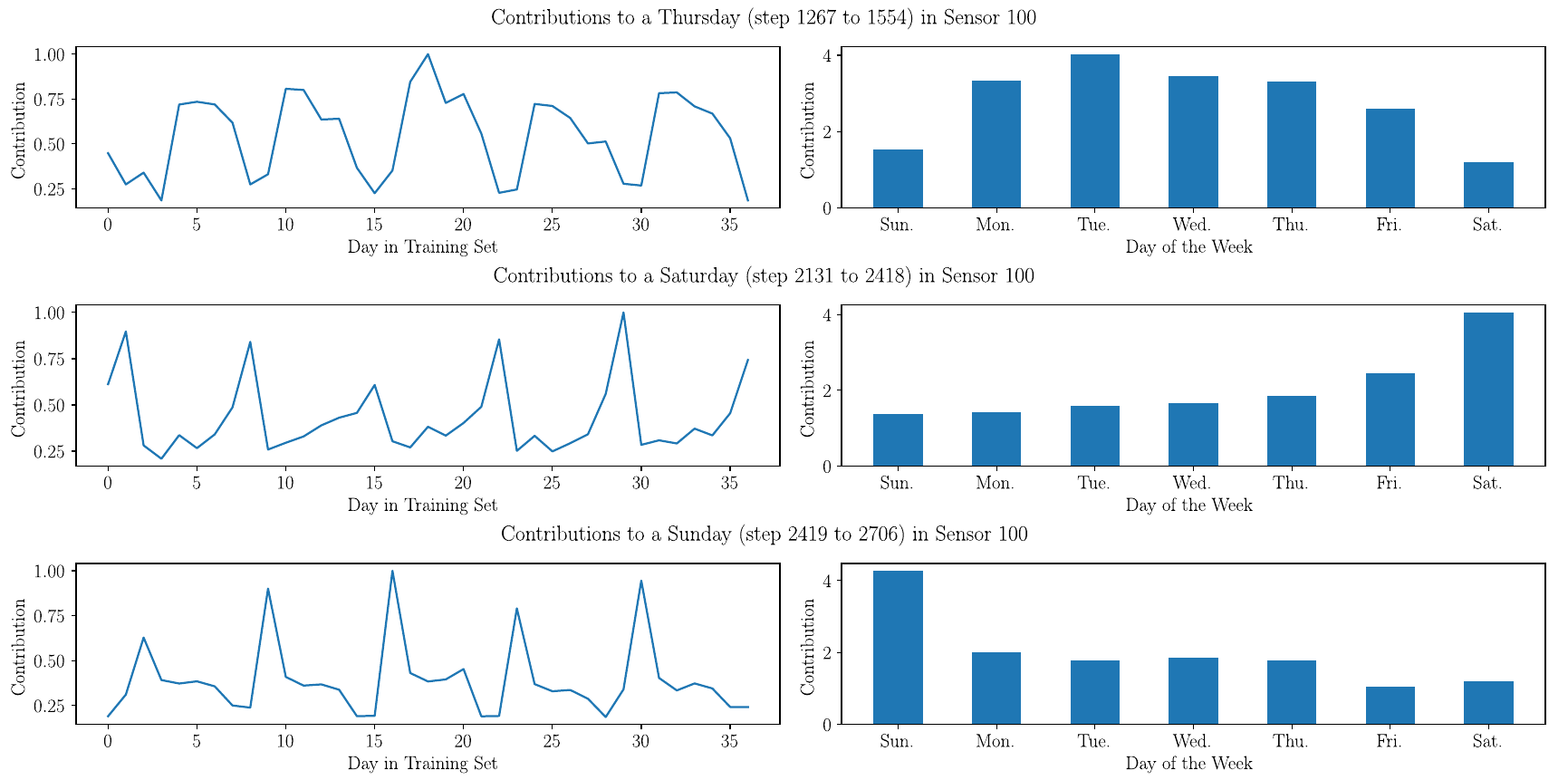}
    \caption{Visualizations of the day-to-day contributions and the day of week contributions to a Thursday, a Saturday, and a Sunday of sensor 100 in the test set of PEMS08.}
    \label{fig:interp_diw}
\end{figure*}




\subsection{Efficiency Study} \label{sec:effi_stu}

In this section, we investigate the efficiency of TSNN in two aspects: memory usage and processing time. We investigate the efficiency of TSNN using two implementation strategies:
\begin{enumerate}
    \item Standard Strategy (denoted as TSNN): This strategy separates the memory bank construction and prediction stages, storing the memory banks of all layers to meet practical requirements for real-world applications.
    \item Memory-Efficient Strategy (denoted as TSNN$^\dagger$): This strategy optimizes space complexity by retaining only the memory banks of the current and preceding layers during the layer prediction, thereby significantly reducing memory usage.
\end{enumerate}
We present the results on ETTh1, PEMS08, and PEMS07 to represent the small, medium, and large-scale datasets, respectively. We report the processing time of TSNN as the sum of ``memory bank construction time + prediction time''. Note that in real-world applications, the memory bank construction phase conceptually corresponds to the training stage of neural networks, as both are prerequisite steps before deployment.
For baselines, we report the total runtime (including training, validation, and testing) after training for 100 epochs using the default configurations in BasicTS. The results are shown in Table \ref{tab:mem_speed}.

For the memory usage, the linear baseline DLinear demonstrates the lowest memory consumption due to its simple structure. Deep learning models like DCRNN and GWNet exhibit moderate to high memory usage depending on the graph size. For the standard TSNN, the memory usage exhibits a linear growth with the number of layers. On the large-scale PEMS07 dataset, the memory consumption of a 10-layer TSNN reaches 19.8 GB, exceeding that of DCRNN. This indicates that storing full historical residuals for all layers can be a bottleneck for large datasets. In contrast, the proposed memory-efficient strategy (TSNN$^\dagger$) effectively resolves this bottleneck. The memory usage of TSNN$^\dagger$ remains nearly constant regardless of the number of layers. On PEMS07, TSNN$^\dagger$ reduces the memory usage of 10 layers from 19836 MB to 6132 MB, making it comparable to GWNet and more efficient than DCRNN. 

For computational efficiency, TSNN generally consumes much less time to construct the memory bank compared with the training time of the baselines. Especially in the small datasets (ETTh1), TSNN only needs a few seconds to produce prediction results from the training data. That shows the potential of real-time application of TSNN on small-scale data. This characteristic is particularly advantageous for cold-start scenarios. However, when the scale of data increases, the disadvantage of the relatively long prediction time becomes apparent, which limits the applicability of TSNN. More discussion on space and time complexity and potential improvement is presented in Section \ref{sec:space_comp}. 

\begin{table}[htbp]
\centering
\caption{Memory usages (in megabytes) and runtime/processing time (in seconds) on datasets ETTh1, PEMS08, and PEMS07.}
\label{tab:mem_speed}

\scalebox{0.72}{
\small
\begin{tabular}{c|c|c|c|c|c|c}

\toprule
\midrule

\textbf{Datasets} & \multicolumn{2}{c|}{ETTh1} & \multicolumn{2}{c|}{PEMS08} & \multicolumn{2}{c}{PEMS07} \tabularnewline

\midrule
\midrule

\textbf{Methods} & \textbf{Memory} & \textbf{Time} &\textbf{Memory} & \textbf{Time} & \textbf{Memory} & \textbf{Time} \tabularnewline

\midrule

DCRNN & - & - & 3256 & 2728 & 18272 & 19563\tabularnewline
\midrule
GWNet & - & - & 862 & 1419 & 4480 & 8893 \tabularnewline
\midrule
DLinear & 56 & 50 & 241 & 89 & 1845 & 305 \tabularnewline
\midrule
Informer & 336 & 290 & 446 & 398 & 2092 & 683 \tabularnewline
\midrule
\midrule
TSNN (1 layer) & 48 & 0+1.1 & 920 & 0+21 & 7520 & 0+112 \tabularnewline
\midrule
TSNN (2 layers) & 70 & 1.3+1.2 & 1092 & 27+25 & 8892 & 138+150 \tabularnewline
\midrule
TSNN (5 layers) & 290 & 2.2+1.3 & 1692 & 63+37 & 12996 & 653+271 \tabularnewline
\midrule
TSNN (10 layers) & 310 & 4.2+2.0 & 2436 & 105+48 & 19836 & 1307+500 \tabularnewline
\midrule
\midrule
TSNN$^\dagger$ (1 layer) & 50 & 1.1 & 689 & 21 & 5698 & 113 \tabularnewline
\midrule
TSNN$^\dagger$ (2 layers) & 252 & 2.4 & 1020 & 53 & 6116 & 298\tabularnewline
\midrule
TSNN$^\dagger$ (5 layers) & 254 & 3.5 & 1024 & 84 & 6122 & 926\tabularnewline
\midrule
TSNN$^\dagger$ (10 layers) & 258 & 5.2 & 1028 & 152 & 6132 & 1872\tabularnewline
\bottomrule

\end{tabular}
}
\end{table}

\section{Discussion} \label{sec:limit}

In this section, we discuss some observed limitations of TSNN and try to provide possible directions for future work.

\subsection{Out-of-distribution Problem} \label{sec:ood_prob}

Since TSNN generates layer predictions by linear combinations of historical signals (Equation (\ref{equ:yl_sum})), it inherently struggles to handle the out-of-distribution (OOD) problem. While subsequent layers iteratively correct these errors (as observed in Fig. \ref{fig:vis_dec_nomean}), the convergence is inefficient for severe OOD instances, suggesting the need for more robust extrapolation mechanisms.
One of the future works can be overcoming the OOD problem.
A possible solution is to extract features by analyzing the patterns inside the time series, such as trend and seasonality. Thereby, the model can expand the distribution range of results through richer feature combinations.
In addition, as we mentioned in Section \ref{sec:main_res}, utilizing spatial information is another potential way to provide extra features to the model and improve the performance of TSNN.

\subsection{Space and Time Complexity} \label{sec:space_comp}

Because the memory bank stores entries for the instances in the training set, the space complexity for a sensor is $\mathcal{O}(\vert \mathcal{T} \vert \cdot TL)$ where $\vert \mathcal{T} \vert$ denotes the number of instances, $T$ and $L$ denote the time steps in prediction and number of layers. 
The space complexity would be even larger when multiple sensors are considered simultaneously. 
Fortunately, the memory bank and prediction process can be implemented in a modular manner. Based on the target sensors and the accuracy requirement, the computation unit can only load the needed part of the memory bank, reducing the space and computation requirements. 

However, the total space complexity and time complexity are still the limitations of the proposed TSNN, which limit its capability to extend to other tasks, such as long-term time series forecasting. 
A smaller memory bank occupies less memory and leads to less computation.
Therefore, future works can aim to reduce the memory bank size while minimizing accuracy loss. 
Some techniques might be useful to reduce the complexity, such as distilling \cite{hinton2015distilling} and Mixture-of-Experts \cite{cai2024survey}. 

\section{Conclusion} \label{sec:conclu}
This paper presented TSNN, a novel non-parametric framework for traffic flow forecasting. This framework leverages memory banks to store the features of the time series and decoupled signals of each instance in layers, enabling inherent interpretability to TSNN. Two proposed decoupling methods, time-wise decoupling and mean decoupling, enhance the performance of the framework. Through extensive experiments, TSNN shows competitive performance compared to the typical deep learning methods. The experiments also verify the effectiveness of the proposed components and visualize the interpretability. Despite TSNN's promising performance, the limitations, such as the out-of-distribution problem and high complexity, exist and remain areas for further exploration. 
Future work can aim to address these challenges.
By addressing these limitations, TSNN can be a more robust and efficient solution for time series forecasting.



\bibliographystyle{ieeetr}
\bibliography{refs.bib}

\newpage

\vfill

\end{document}